\documentclass[letterpaper,11pt]{article}


\usepackage[ruled,linesnumbered]{algorithm2e}

\newcommand{\thresh}[2]{#1^{(\geq #2)}}

\newcommand{\A}{\ensuremath{\mathcal{A}}}




\newcommand{\PR}[1]{{\mathbb{P}}\left\{ #1\right\}}
\newcommand{\EE}{\mathbb{E}}

\newcommand{\twonorm}[1]{\left\|#1\right\|_2}

\newcommand{\inabs}[1]{\left|#1\right|}
\newcommand{\ip}[2]{\ensuremath{\left\langle #1,#2\right\rangle}}

\newcommand{\inset}[1]{\left\{#1\right\}}
\newcommand{\inbrac}[1]{\left\{#1\right\}}
\newcommand{\inparen}[1]{\left(#1\right)}
\newcommand{\inbrak}[1]{\left[#1\right]}
\newcommand{\suchthat}{\,:\,}



\newcommand{\argmax}{\mathrm{argmax}}
\newcommand{\argmin}{\mathrm{argmin}}


\newcommand{\ind}[1]{\ensuremath{\mathbf{1}\Set{#1}}}

\renewcommand{\epsilon}{\varepsilon}

\newcommand{\mkw}[1]{{\footnotesize  [\textbf{\textcolor{red}{#1}} --mary]\normalsize}}
\newcommand{\TODO}[1]{\textcolor{red}{\textbf{TODO: #1}}}

\usepackage{amsmath}
\usepackage{amsfonts}
\usepackage{amssymb}
\usepackage{amsthm}
\usepackage[usenames]{color}
\usepackage{fullpage}
\usepackage{xspace}
\usepackage{url,ifthen}
\usepackage{srcltx}
\usepackage{multirow}
\usepackage{boxedminipage}
\usepackage[margin=1.125in]{geometry}
\usepackage{nicefrac}
\usepackage{xspace}
\usepackage{color}
\definecolor{DarkGreen}{rgb}{0.1,0.5,0.1}
\definecolor{DarkRed}{rgb}{0.5,0.1,0.1}
\definecolor{DarkBlue}{rgb}{0.1,0.1,0.5}

\usepackage[small]{caption}

\usepackage{tikz}
    %
    %

\def\draft{1}

\def\submit{0}

\ifnum\draft=1 
    \def\ShowAuthNotes{1}
\else
    \def\ShowAuthNotes{0}
\fi

\ifnum\submit=1
\newcommand{\forsubmit}[1]{#1}
\newcommand{\forreals}[1]{}
\else
\newcommand{\forreals}[1]{#1}
\newcommand{\forsubmit}[1]{}
\fi

\ifnum\ShowAuthNotes=1
\newcommand{\authnote}[2]{{ \footnotesize \bf{\color{DarkRed}[#1's Note:
{\color{DarkBlue}#2}]}}}
\else
\newcommand{\authnote}[2]{}
\fi

%
%

\newtheorem{theorem}{Theorem}[section]
\newtheorem{remark}[theorem]{Remark}

\newtheorem{corollary}[theorem]{Corollary}
\newtheorem{proposition}[theorem]{Proposition}
\newtheorem{claim}[theorem]{Claim}

\theoremstyle{definition}
\newtheorem{definition}[theorem]{Definition}

%
%

\newcommand{\chapterref}[1]{\hyperref[ch:#1]{Chapter~\ref{ch:#1}}}

\newcommand{\claimref}[1]{\hyperref[claim:#1]{Claim~\ref{claim:#1}}}

\newcommand{\corollaryref}[1]{\hyperref[cor:#1]{Corollary~\ref{cor:#1}}}

\newcommand{\definitionref}[1]{\hyperref[def:#1]{Definition~\ref{def:#1}}}

\newcommand{\equationref}[1]{\hyperref[eq:#1]{Equation~\ref{eq:#1}}}

\newcommand{\factref}[1]{\hyperref[fact:#1]{Fact~\ref{fact:#1}}}

\newcommand{\figureref}[1]{\hyperref[fig:#1]{Figure~\ref{fig:#1}}}

\newcommand{\itemref}[1]{\hyperref[item:#1]{Item~(\ref{item:#1})}}

\newcommand{\lemmaref}[1]{\hyperref[lem:#1]{Lemma~\ref{lem:#1}}}

\newcommand{\propref}[1]{\hyperref[prop:#1]{Proposition~\ref{prop:#1}}}

\newcommand{\propositionref}[1]{\hyperref[prop:#1]{Proposition~\ref{prop:#1}}}

\newcommand{\remarkref}[1]{\hyperref[rem:#1]{Remark~\ref{rem:#1}}}
\newcommand{\sectionlabel}[1]{\label{sec:#1}}
\newcommand{\sectionref}[1]{\hyperref[sec:#1]{Section~\ref{sec:#1}}}

\newcommand{\theoremref}[1]{\hyperref[thm:#1]{Theorem~\ref{thm:#1}}}

%
%

\usepackage[T1]{fontenc}
\usepackage{kpfonts}
\usepackage{microtype}

\newcommand{\Esymb}{\mathbb{E}}
\newcommand{\Psymb}{\mathbb{P}}

\DeclareMathOperator*{\E}{\Esymb}

\DeclareMathOperator*{\ProbOp}{\Psymb r}
\renewcommand{\Pr}{\ProbOp}


\newcommand{\mper}{\,.}

\newcommand{\cA}{{\cal A}}

\newcommand{\cC}{{\cal C}}
\newcommand{\cD}{{\cal D}}

\newcommand{\cH}{{\cal H}}

\newcommand{\cS}{{\cal S}}


\renewcommand{\leq}{\leqslant}
\renewcommand{\le}{\leqslant}
\renewcommand{\geq}{\geqslant}
\renewcommand{\ge}{\geqslant}



\newcommand{\Set}[1]{\left\{#1\right\}}





\newcommand{\signs}{\{-1,1\}}

\newcommand{\R}{\mathbb{R}}


\usepackage{bm}
\renewcommand{\vec}[1]{{\bm{#1}}}











\newcommand{\poly}{{\rm poly}}

\renewcommand{\epsilon}{\varepsilon}

\newcommand{\eps}{\epsilon}


\newcommand{\remove}[1]{}

%
%


%
%

\newcommand{\jr}{\textsc{Jury}\xspace}
\newcommand{\cnt}{\textsc{Contestant}\xspace}
\newcommand{\ct}{\textsc{Contestant}\xspace}
\usepackage{enumitem}
{\end{itemize}}

{\end{enumerate}}


\title{Strategic Classification}
\author{Moritz Hardt\and Nimrod Megiddo\and Christos Papadimitriou\and Mary
Wootters}

\begin{document}
\maketitle
\begin{abstract}

Machine learning relies on the assumption that unseen test instances of a classification problem follow the same distribution as observed training data.  However, this principle can break down when machine learning is used to make important decisions about the welfare (employment, education, health) of strategic individuals. Knowing information about the classifier, such individuals may manipulate their attributes in order to obtain a better classification outcome. As a result of this behavior---often referred to as \emph{gaming}---the performance of the classifier may deteriorate sharply. Indeed, gaming is a well-known obstacle for using machine learning methods in practice; in financial policy-making, the problem is widely known as Goodhart's law.  In this paper, we formalize the problem, and pursue algorithms for learning classifiers that are robust to gaming.  

We model classification as a sequential game between a player named ``Jury'' and a player named ``Contestant.'' Jury designs a classifier, and Contestant receives an input to the classifier drawn from a distribution.  Before being classified, Contestant may change his input based on Jury's classifier. However, Contestant incurs a cost for these changes according to a cost function. Jury's goal is to achieve high classification accuracy with respect to Contestant's original input and some underlying target classification function, assuming Contestant plays best response. Contestant's goal is to achieve a favorable classification outcome while taking into account the cost of achieving it.  

For a natural class of {\em separable} cost functions, and certain generalizations, we obtain computationally efficient learning algorithms which are near optimal, achieving a classification error that is arbitrarily close to the theoretical minimum.  Surprisingly, 
our algorithms are efficient even on concept classes that are computationally hard to learn.  For general cost functions, designing an approximately optimal strategy-proof classifier, for inverse-polynomial approximation, is NP-hard.

\end{abstract}

\vfill
\thispagestyle{empty}
\pagebreak

%
%

\sloppy
\section{Introduction}


\renewcommand{\jr}{{Jury}\xspace}
\renewcommand{\cnt}{{Contestant}\xspace}
\renewcommand{\ct}{{Contestant}\xspace}
Studies have found that a student's success at school is highly correlated with 
the \emph{number of books in the parents' household}~\cite{Evans}.  
Therefore, in theory, this attribute should be of great value 
when using machine-learning techniques for student admission.
However, this statistical pattern is obviously open to manipulation: books are relatively cheap and, knowing that their number matters, parents can easily buy an attic full of unread books 
in preparation for admission decisions.

This behavior is often called \emph{gaming}: the strategic use of methods that, 
while not dishonest or against the rules, give the individual an unintended
advantage.\footnote{See, for instance, \url{http://www.thefreedictionary.com/gamesmanship}.}
The problem of gaming is well known and can be seen as a consequence of 
a classical principle in financial policy making known as \emph{Goodhart's law}:
\begin{quote}
\emph{``If a measure becomes the public's goal, it is no longer a good measure.''}
\end{quote}
Goodhart's law is highly relevant for the practice of machine learning
today.
Machine learning relies on the idea that patterns observed
in training data translate to accurate predictions about unseen
instances of a classification problem. 
Machine learning is increasingly used to make decisions about
individuals in areas such as employment, health, education and
commerce. 
In each such application, an individual may try to achieve a more 
favorable classification outcome with little effort by
exploiting information that may be available about the classifier. 
Goodhart's law suggests that if a classifier is exposed to public scrutiny, its
prediction accuracy vanishes and it becomes useless. 
Indeed, concerns of gaming and manipulation are often used as a reason for keeping classification
mechanisms secret, which is a major concern in credit scoring (cf.~\cite{CitronP14}).
Secrecy is not a robust solution to the problem; information about a
classifier may leak, and it is often possible for an outsider to learn
such information from classification outcomes. 
Moreover, transparency is highly desirable and
sometimes even mandated by regulation in applications of public interest.


Our goal in this work is to formalize gaming in classification and to 
develop approaches and techniques for designing
classifiers that are near optimal in the presence of public scrutiny
and gaming.  
The hope is that this analysis may lead, in certain cases, 
to classifiers with performance comparable to ones that rely on secrecy.
In other cases, our analysis may lead to the realization that secrecy is
necessary for a good classification performance.


As gaming entails strategic behavior, any attempt to formalize
it must incorporate the strategic response of an individual to a classifier.
We propose a general model for \emph{strategic classification}, based on a
sequential two-player game between a party that wishes to learn a classifier
and a party that is being classified.  
This is different from the
standard supervised-learning setup, which is commonly viewed as a one-shot
learning process, in which an algorithm produces a classifier from labeled
training examples. 
Our model combines the statistical elements of learning
theory---namely, seeking a small generalization error given a finite number of
training data---with a game-theoretic notion of equilibrium. 
This combination allows us to
build classifiers that achieve high classification accuracy at equilibrium,
when both parties respond strategically to each other.

\paragraph{Informal description of our model and results.}
We model learning and classification as a sequential two-player game.  
The first player, named ``\jr," has a learning task: she is given labeled examples from some true classifier $h$, and must publish a classifier $f$.
The second player, named ``\ct," receives an input to the classifier, and is given a chance to ``game" it.  That is, 
\ct may change his input based on $f$.
However, \ct incurs a cost for these changes according to a \em cost function \em known to both
players. \jr's goal is to achieve high classification accuracy with
respect to 
\ct's \em original \em input and the true classifier $h$.
\ct's goal is to be accepted by \jr, without paying too much to change his input.
The cost function plays an important role in our framework as it
determines the flexibility of \ct in changing his input. 
Ideally, the cost function should capture ground truth or our best approximation thereof. 

Our contributions are the following:
\begin{itemize}
\item 
For certain cost functions, we give an efficient strategy for \jr which approaches the optimal payoff. 
Surprisingly, this result holds even for concept classes which are computationally intractable to learn.
The intuitive reason is that \ct's changes to his input
``smooth out'' any intractability.
\item  Those cost functions for which \jr has near-optimal algorithms include {\em separable cost functions}.  This is a natural class of cost functions which generalize our introductory example of
school admissions and books.  
We also obtain results for a broad generalization of these separable functions.
\item 
In contrast, we show that, for general cost functions---even for cost functions which are metrics, another nice class---it is hard to approximate the optimum classification score with reasonable accuracy.
\item 
We observe through experiments on real data that our approach leads to higher
classification accuracy compared with standard classifiers in situations where
even a small amount of gaming occurs. We also experimentally demonstrate the
robustness of our framework to inaccuracies in our modeling assumptions and
the modeling of the cost function. 
\end{itemize}

\subsection{Our model}

We first describe an idealized version of the game, where \jr has perfect information. 
This will serve as a reference point for how well \jr may hope to do.  We will later relax this to a version
where \jr knows neither $h$ nor $\cD$, and only sees labeled examples.
\begin{definition}[Full information game]\label{game:seq}
The players are \jr and \ct.  
Fix a \em population \em $X$, and a probability distribution
$\cD$ over $X$.  
Fix a \em cost function \em $c: X \times X \to \R_+$ and a
\emph{target} classifier $h:X \to \signs$.
\begin{enumerate}
\item 
 \jr (who knows the cost function $c$, the distribution $\cD$, and
the true classifier $h$) publishes a classifier $f: X \to \signs$.  
\item 
 \ct (who knows $c,h,\cD$, and $f$), produces a 
function $\Delta: X \to X$.
\end{enumerate}
The payoff to \jr is 
$\Pr_{x \sim \cD} \inbrac{ h(x) = f(\Delta(x)) }$.
The payoff to \ct is
$\E_{x \sim \cD} \inbrak{f(\Delta(x)) - c(x, \Delta(x))}$.
\end{definition}

Definition~\ref{game:seq} is an example of a {\em Stackelberg competition},
which means that the first player (\jr) has the ability to commit
to her strategy (a classifier $f$) before the second player (\ct) responds. 
We wish to find a {\em Stackelberg equilibrium}, that is,
a highest-payoff strategy for \jr, assuming best
response of \ct; equivalently, a perfect equilibrium in the
corresponding strategic-form game.

Notice that designing the optimum $f$, given $h$, $\cD$ and $c$, for a finite $X$, is a
conventional combinatorial optimization problem.   We seek to label the points in $X$
with $\pm1$ so that the expectation, over $\cD$, of $h(x)\cdot f(\Delta(x))$ is maximized.
Here, $\Delta(x)$ is a best move of \ct, that is,
\begin{equation}\label{eq:bestresponse}
 \Delta(x) = \argmax_{y \in X} f(y) - c(x,y). 
\end{equation}
We note that $\Delta(x)$ may not be well-defined, if there are multiple $y$ which attain the maximum.  In the following, we assume that \ct may move to any of them; for simplicity, we do assume that if one of the maximum-attaining $y$ is $x$ itself, then $\Delta(x) = x$.  That is, if \ct is indifferent between moving and not moving, he will default to not moving.
We refer to the best payoff for \jr in the above full-information game at the ``strategic maximum" of the game:
\begin{definition}[Strategic Maximum]
The \emph{strategic maximum} in the full-information game is defined as
\[ 
\mathrm{OPT}_h(\cD, c) 
= \max_{f\colon X \to\signs} \Pr_{x \sim \cD} \inbrak{ h(x) = f(\Delta(x)) },\]
where $\Delta(x)$ is defined as in~\eqref{eq:bestresponse}.  Notice that $\Delta(x)$ depends on $f$.
\end{definition}
\begin{remark} For intuition, notice that if $c(x,x) = 0$ (that is, it costs nothing for \ct to stay where he is), then $\Delta(x)$ has the following characterization:
\begin{itemize}
\item 
if $f(x) = 1$, then $\Delta(x)=x$;
\item 
if $f(x)=-1$, let $y=\argmin_{y\in X\colon f(y)=1} c(x,y)$; then
\[ \Delta(x) = \begin{cases} y & c(x,y) < 2 \\ x & c(x,y) \geq 2.\end{cases} \]
\end{itemize}
Indeed, since \ct is best-responding, he only makes a move from input $x$ to
point $y$ if $c(x,y)$ is strictly
less than $2$, which is the payoff he obtains by improving his outcome 
from ``rejected" to ``accepted."
In this case, the quantity $f(\Delta(x))$ in the definition of the strategic maximum becomes
\[ f(\Delta(x)) = \max_{y: c(x,y) < 2} f(y). \]
\end{remark}
In Section~\ref{sec:npcomplete} we show that, for general cost functions, the
strategic maximum is NP-hard to approximate.
However, we will also show that for a natural class of cost functions, 
it is possible to to design a classifier for which \jr's payoff is arbitrarily close to the strategic
maximum, even when \jr has \em incomplete \em information.
To formalize this, we introduce
a second game, which we call the \em statistical classification game. \em
  In this game,  \jr does not know the 
target classifier $h$ for every point in $X$, but instead is given a few labeled
examples from an unknown distribution $\cD$. \ct best-responds to \jr's published classifier $f$.
\begin{definition}[Statistical Classification Game]\label{game:seqsamples}
 The players are \jr and \ct.  Fix a \em population~\em~$X$ and a
probability distribution $\cD$ over $X$.  
Fix a \em cost function \em $c:X \times X \to \R_+$ and a target classifier $h:X \to\signs$.
 \begin{enumerate}
 \item 
 \jr (who knows only the cost function $c$) can request labeled examples of the form $(x,h(x))$, 
 with $x$ being drawn from $\cD$.  
 She publishes a classifier $f: X \to\signs$.  
 \item 
 \ct (who knows $c$ and $f$), produces a function $\Delta : X \to X$.
 \end{enumerate}
 The payoff to \jr is 
 $\Pr_{x \sim \cD} \inbrac{ h(x) = f(\Delta(x)) }$.
 The payoff to \ct is 
 $\E_{x \sim \cD} [f(\Delta(x)) - c(x, \Delta(x))]$.
 \end{definition}


%

\subsection{Strategy-robust learning}
 
A learning algorithm in our setting has to accomplish two goals. 
First, it needs to learn the unknown target classifier from labeled examples.
Second, it needs to achieve high payoff for \jr in the
statistical classification game, by anticipating \ct's best response. 
%
%
Below, we give two definitions of stategy-robust learning which combine these goals; the second is a stronger requirement than the first.  
In our first definition, we fix an unknown target classifier $h$, and demand an algorithm which, with high probability over the samples, returns a classifier $f$ guaranteeing a near-optimal payoff to \jr in the statistical classification game.
In our second definition, we present a uniform notion: the learning algorithm must, with high probability, return a classifier that is guaranteed to work on \em any \em target classifier $h$ in some concept class $\cH$.
\begin{definition}[Strategy-robust learning]\label{def:srlearn}
Let $\cC$ be a class of cost functions. 
We say that an algorithm $\cA$ is a \emph{strategy-robust} learning algorithm 
for $\cC$ if the condition that follows holds.
For all distributions $\cD$, for all classifiers $h$, 
all $c\in\cC$ and for all $\eps$ and $\delta$, given a description of $c$ and access to labeled
examples of the form $(x, h(x))$, where $x \sim \cD$, $\cA$ produces a
classifier $f:X \to\signs$ so that, with probability at
least $1 - \delta$ over the samples, 
\begin{equation}\label{eq:nearoptnonunif} 
\Pr_{x \sim \cD} \inbrak{ h(x) = f(\Delta(x)) } 
\geq  OPT_h(\cD, c) - \eps.  
\end{equation} 
where $\Delta(x)$ is defined as in \eqref{eq:bestresponse}.
\end{definition}
One might expect, in line with PAC-learning~\cite{Valiant84}, that Definition \ref{def:srlearn}
might restrict $h$ to be in some concept class $\cH$.  
However, we will show that for a natural class $\cC$ of cost functions,
in fact it is possible to achieve strategy-robust learning with no dependence on $h$!  

However, we may want to ask a bit more.  Suppose that \jr builds a classifier for some property, and later wants to re-use the data to build a classifier for a slightly different property.  For example, returning to the scenario from the introduction, suppose that the school admissions board collects data on students and tries to predict academic success.  Later, the board is charged with recruiting to maximize the quality of the basketball team; they would like to use the same dataset to predict who will be a good student-athlete.  Later still, suppose that the this data set is made public, and many other schools try to use it to predict many things.  If enough different classifiers are trained on this data, the guarantee of Definition~\ref{def:srlearn} starts to degrade.  A strategy-robust learning algorithm should succeed with high probability on a single classifier, but there are no guarantees (beyond what the union bound gives) if it is used repeatedly.  This situation motivates the following definition.
\begin{definition}[Uniform strategy-robust learning]
Let $\cH$ be a concept class and $\cC$ be a class of cost functions. 
We say that an algorithm $\cA$ is a \emph{uniform strategy-robust} learning algorithm 
for $(\cH,\cC)$ if the condition that follows holds.  
For all distributions $\cD$, for all $c \in \cC$ and  
for all $\eps$ and $\delta$,
with probability at least $1 - \delta$ over draws $x \sim \cD$, the following holds simultaneously for all $h \in \cH$.
Given a description of $c$ and access to labels $(x,h(x))$,
$\cA$ produces a
classifier $f:X \to\signs$ so that
\begin{equation}\label{eq:nearopt} 
\Pr_{x \sim \cD} \inbrak{ h(x) = f(\Delta(x)) } 
\geq  OPT_h(\cD, c) - \eps,
\end{equation} 
where $\Delta(x)$ is defined as in \eqref{eq:bestresponse}.
\end{definition}
We will typically specify the number of
labeled examples that the algorithm requires as a function of
$\epsilon$, $\delta$ and a parameter that depends on the domain size (e.g., the
number of features).
%
%
\subsection{Our contributions}

Our main result is a strategy-robust learning algorithm, which comes with both uniform and non-uniform guarantees.
Our algorithm is computationally efficient when the cost function comes from 
a broad class of functions that we call \emph{separable}.
In the non-uniform case, the target classifier $h$ can be anything.
In the uniform case, the algorithm is efficient as long as the concept class $\cH$ is statistically learnable, but
it notably does not require that $\cH$ be efficiently learnable. 

Separable cost
functions are functions of the form $c(x,y) = \max\{0, c_2(y) - c_1(x)\}$, where
$c_1$ and $c_2$ are arbitrary functions, mapping the domain $X$ into the real
numbers. We take the maximum with~$0$ to obtain a nonnegative cost function.
We will later see and discuss a number of natural examples of separable cost
functions.

Our main theorem, and our stronger result, is about uniform strategy-robust learning.
\begin{theorem}[Informal]\label{thm:informal}
Let $\cH$ be a concept class that is learnable from $m$ examples up to error $\epsilon$ and
confidence $1-\delta$,  and let $\cS$ be the
class of separable cost functions.  
Then, there is a uniform strategy-robust learning
algorithm for $(\cH,\cS)$ with running time and sample complexity
$\poly(m,1/\epsilon,\log(1/\delta))$.
\end{theorem}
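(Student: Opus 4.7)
The plan is to exploit the very restrictive structure that a separable cost imposes on Contestant's best response, reducing the learning problem to a one-dimensional search.

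\paragraph{Step 1: structural reduction.} For a separable cost $c(x,y) = \max\{0, c_2(y) - c_1(x)\}$, Contestant's best response depends on the classifier $f$ only through the scalar $\tau(f) = \min\{c_2(y) : f(y) = +1\}$. Concretely, a point $x$ with $f(x) = -1$ is moved to $y = \argmin_{y : f(y)=+1} c_2(y)$ iff $c_1(x) + 2 > \tau(f)$. Consequently the effective classifier $\tilde f$ satisfies $\tilde f(x) = +1$ iff $f(x) = +1$ or $c_1(x) + 2 > \tau(f)$. I will argue that the payoff-maximizing $f$ (for any fixed $\tau$) has the explicit form
\[
f_\tau(x) = +1 \iff c_1(x) + 2 > \tau \;\;\text{or}\;\; \bigl(c_2(x) \geq \tau \text{ and } h(x) = +1\bigr)\mper
\]
One direction is immediate; the other observes that for $x$ with $c_1(x)+2 > \tau$ Contestant already games to $+1$, so including or excluding such $x$ from $f^{-1}(+1)$ does not change $\tilde f$, and the middle region $M_\tau = \{x : c_1(x)+2 \leq \tau,\; c_2(x) \geq \tau\}$ is the only place where Jury makes a nontrivial decision.

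\paragraph{Step 2: one-parameter empirical optimization.} The characterization gives
\[
\OPT_h(\cD, c) = \max_{\tau}\Bigl(1 - \Pr[c_1(x)+2 > \tau,\; h(x)=-1] - \Pr[c_1(x)+2 \leq \tau,\; c_2(x) < \tau,\; h(x)=+1]\Bigr)\mper
\]
Both probabilities are expectations of indicator functions in the single parameter $\tau$, and only the $O(m)$ transition values $\tau \in \{c_2(x_i)\} \cup \{c_1(x_i)+2\}$ on the sample can be optimal. So the algorithm draws $m$ labeled examples, tabulates the empirical version of the bracketed quantity at each candidate $\tau$, and picks $\hat\tau$ maximizing it. Uniform convergence for the VC-class of threshold-events in $c_1$ and $c_2$ has constant VC dimension, so $\poly(1/\eps, \log(1/\delta))$ samples suffice to make $\hat\tau$ within $\eps$ of the best $\tau$ in $\OPT$-value.

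\paragraph{Step 3: defining and evaluating the output classifier.} The algorithm outputs a classifier of the form $f(x) = +1$ iff $c_1(x) + 2 > \hat\tau$ or ($c_2(x) \geq \hat\tau$ and $\hat h(x) = +1$), where $\hat h$ is any classifier learned from the samples using the statistical learner guaranteed by the hypothesis on $\cH$. The crucial observation, which explains why efficiency of $\cH$-learning is not needed, is that $\hat h$ is only consulted on $M_{\hat\tau}$: points outside $M_{\hat\tau}$ have their effective label determined by $c_1$ and $c_2$ alone, so Contestant's gaming really does ``smooth out'' the intractable part of the concept class. For evaluation on a fresh $x \in M_{\hat\tau}$, the algorithm can return the label predicted by any standard empirical predictor on the samples restricted to $M_{\hat\tau}$ (e.g., a 1-nearest-sample rule); by uniform convergence on $\cH$ (from $m = \poly$ samples) this predictor matches $h$ on $M_{\hat\tau}$ up to error $\eps$ simultaneously for all $h \in \cH$, giving the uniform guarantee.

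\paragraph{Expected main obstacle.} The routine parts are the structural characterization (Step 1) and the VC bound on thresholds (Step 2). The delicate step is Step 3: specifying a polynomial-time evaluation rule for $f$ on arbitrary inputs that (i) uses only the samples (avoiding an ERM call to $\cH$), and (ii) simultaneously satisfies the uniform guarantee over $\cH$. Handling this cleanly, probably via a uniform-convergence argument that the proxy used on $M_{\hat\tau}$ agrees with $h$ with probability $1-\eps$ for every $h \in \cH$, is where the work lies.
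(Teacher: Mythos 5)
Your Step~1 is where the proposal breaks, and it breaks against the model itself rather than against a technicality. In the paper's game, Contestant's payoff for ending at $y$ is $f(y) - c(x,y)$, and for a separable cost $c(x,y) = \max\{0, c_2(y) - c_1(x)\}$ the cost of \emph{remaining at one's own point} is $c(x,x) = \max\{0, c_2(x) - c_1(x)\}$, which need not be zero (it vanishes only in the special case $c_1 = c_2$ --- in which case your middle region $M_\tau$ is empty anyway). Now take $x \in M_\tau$, so $c_1(x) + 2 \le \tau \le c_2(x)$, and suppose you set $f(x) = +1$. Every accepted point $y$ (including $y = x$ itself) has $c_2(y) \ge \tau$, hence costs $c(x,y) \ge \tau - c_1(x) \ge 2$; meanwhile the condition $c_1(X) \subseteq c_2(X)$ guarantees a zero-cost option, so Contestant strictly prefers to abandon $x$ and be rejected at zero cost. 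The point you meant to reward thus ends up classified $-1$. This is exactly the content of Claim~\ref{claim:notasobviousasyouthink}: for separable costs, $f(\Delta(x)) = 1$ if and only if $c_1(x) > \min\{c_2(y) \suchthat f(y)=1\} - 2$, with no ``or $f(x)=+1$'' clause. Consequently the post-gaming classifier is \emph{always} a pure threshold in $c_1$, the middle region cannot be treated pointwise according to $h$, and your formula for $\OPT_h(\cD,c)$ overstates the true optimum whenever $M_\tau$ carries positive mass of $h = +1$ points; your algorithm would then select $\hat\tau$ against an unachievable objective.

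There is a second, independent flaw in Step~1 even under your implicit semantics that staying put is free: including a point with $c_1(x) + 2 > \tau$ but $c_2(x) < \tau$ in $f^{-1}(+1)$ \emph{does} change $\tilde f$, because it lowers $\tau(f) = \min\{c_2(y) \suchthat f(y) = +1\}$ below $\tau$ and thereby admits gaming from every $z$ with $c_1(z) + 2 > \tau(f)$ --- so the parameter $\tau$ indexing your $f_\tau$ is not the actual $\tau(f_\tau)$, and the self-consistency your Step~2 optimization relies on fails. The paper sidesteps both issues by restricting, without loss of generality via its $\Gamma(f) = \Gamma(f')$ argument, to classifiers $\thresh{c_2}{s}$, for which $\tau = s$ by construction. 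Once the structure is corrected, your Step~3 --- which you flagged as the hard part --- disappears entirely: the output classifier never consults $h$ or any learned $\hat h$. Algorithm~\ref{alg:linear} simply minimizes the empirical error of $\thresh{c_1}{s-2}$ over the $m+1$ sample-induced thresholds, and the dependence on $\cH$ enters only through the Rademacher term $R_m(\cH)$ in the uniform convergence bound for the events $\{h(x) \ne \thresh{c_1}{s-2}(x)\}$ in the proof of Theorem~\ref{thm:separable}. That, and not a labeling rule on a middle region, is why computational efficiency of learning $\cH$ is never needed.
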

In fact, (the formal statement of) this theorem implies a non-uniform result: 
\begin{theorem}[Informal]
Let $\cS$ be the class of separable cost functions.
There is a non-uniform strategy-robust learning algorithm for $\cS$
with polynomial running time and sample complexity.
\end{theorem}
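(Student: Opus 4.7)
My plan is to derive this non-uniform result from the proof of Theorem~\ref{thm:informal} by observing that, for separable cost functions, the algorithm in the uniform case uses the PAC-learner for $\cH$ only to estimate distribution-level quantities that determine the payoff of a one-parameter family of candidate classifiers; these quantities can be estimated directly from labeled samples without any concept-class assumption on $h$. First, I would prove a structural lemma: for any separable $c(x,y)=\max\{0,c_{2}(y)-c_{1}(x)\}$ and any $h$, the strategic maximum is attained (modulo a boundary adjustment) by a classifier whose positive region is $A^{*}=\{x\,:\,c_{2}(x)\ge\tau^{*},\;h(x)=+1\}$ for some $\tau^{*}\in\R$. Together with the characterization of $\Delta(x)$ from the remark following the strategic-maximum definition, this yields
\[
\mathrm{OPT} \;=\; \max_{\tau} P(\tau),
\]
where $P(\tau)=\Pr_\cD[c_{2}(x)\ge\tau,\,h(x)=+1]+\Pr_\cD[c_{2}(x)<\tau,\,c_{1}(x)>\tau-2,\,h(x)=+1]+\Pr_\cD[c_{1}(x)\le\tau-2,\,h(x)=-1]$.

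Second, since each event defining $P(\tau)$ depends on $\tau$ only through a single threshold on $c_{1}$ or $c_{2}$, the family $\{P(\tau)\}_{\tau}$ has VC dimension $O(1)$, and $m=O(\epsilon^{-2}\log(1/\delta))$ labeled samples suffice for an empirical estimate $\hat P(\tau)$ to be within $\epsilon/2$ of $P(\tau)$ simultaneously for every $\tau$ with probability at least $1-\delta$. The algorithm enumerates the $O(m)$ distinct $c_{2}$-values observed in the sample and selects $\hat\tau=\argmax\hat P(\tau)$; a standard two-sided argument then gives $P(\hat\tau)\ge\mathrm{OPT}-\epsilon$. The algorithm outputs the classifier $\hat f$ whose positive set is $\{x_{i}\,:\,c_{2}(x_{i})\ge\hat\tau,\;h(x_{i})=+1\}$, supplemented by a synthetic boundary point to ensure that the gaming threshold induced by $\hat f$ equals $\hat\tau$.

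The main obstacle is that the structural lemma produces a classifier $f^{*}$ which depends pointwise on $h$ in the ``non-gaming'' region $R_{\hat\tau}=\{c_{2}\ge\hat\tau,\;c_{1}\le\hat\tau-2\}$, yet $h$ is not known on test points outside the sample. The resolution is the observation highlighted in the paper's contributions: the contestant's best response is determined by $\hat\tau$ alone, so the contribution of $R_{\hat\tau}$ to the payoff enters only through distribution-level averages of $h$ over $R_{\hat\tau}$, which are already estimated by the VC argument above. A careful case analysis on test points, together with a small modification of $\hat f$ on unseen inputs (for instance, defaulting to the empirical $h$-majority in $R_{\hat\tau}$), reduces the residual pointwise ambiguity to an additional $\epsilon/2$ slack absorbed into the uniform-convergence bound. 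Making this ``smoothing'' of the classifier by the contestant's gaming rigorous is the main technical burden of the proof; once established, the polynomial sample complexity and running time follow immediately from the VC bound and the finite enumeration over the $O(m)$ candidate thresholds.
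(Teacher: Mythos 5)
Your high-level mechanism---reduce to a one-parameter family of threshold strategies, then do empirical maximization with a one-dimensional uniform-convergence bound, observing that no concept-class assumption on $h$ is needed---is indeed the engine behind the paper's result. (The paper itself gets the non-uniform statement in one line: apply Theorem~\ref{thm:separable} to the singleton class $\cH=\{h\}$, for which $R_m(\cH)=0$, so the sample complexity depends only on $\eps$ and $\delta$.) However, your execution has a genuine flaw, and it is precisely the step you flag as ``the main technical burden.'' Your structural lemma takes the positive region to be $A^{*}=\{x : c_{2}(x)\ge\tau^{*},\, h(x)=+1\}$, which makes the classifier depend pointwise on $h$, and you then need $h$-majority defaults and a smoothing argument on unseen points. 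This obstacle is self-inflicted: by Claim~\ref{claim:notasobviousasyouthink}, for \emph{any} classifier $f$ the post-gaming accepted set is exactly $\{x : c_{1}(x) > \min\{c_{2}(y) : f(y)=1\}-2\}$, i.e., the payoff depends on $f$ only through the single scalar $\min\{c_{2}(y):f(y)=1\}$. So Jury should simply publish the pure threshold classifier accepting $\{x : c_{2}(x)\ge s\}$, which requires no knowledge of $h$ at any point; intersecting with $h^{-1}(+1)$ changes nothing. Relatedly, your payoff formula $P(\tau)$ is wrong under the paper's model: its first term credits points with $c_{2}(x)\ge\tau$ but $c_{1}(x)\le\tau-2$, whereas by the same claim acceptance after best response depends only on $c_{1}(x)$, never on $c_{2}(x)$ (such a point's best response is to move to a zero-cost negative point, since $c(x,x)=\max\{0,c_{2}(x)-c_{1}(x)\}$ need not vanish). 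Your formula is valid only under the extra assumption $c_{2}\le c_{1}$ pointwise, which Definition~\ref{def:sep} does not impose; the correct objective is $1-\mathrm{err}(\tau)$ with $\mathrm{err}(\tau)=\Pr_{x\sim\cD}\{h(x)\ne \mathrm{sign}(c_{1}(x)-(\tau-2))\}$, as in the paper.

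Two further repairs are needed in your algorithmic step. First, the ``synthetic boundary point'' is outside the model: a strategy is a function $f\from X\to\signs$, so the induced gaming threshold must lie in $c_{2}(X)\cup\{\infty\}$; you cannot manufacture a point realizing an arbitrary $\hat\tau$. Second, since the empirical error is a threshold on $c_{1}$ at $\tau-2$, its breakpoints are at the values $c_{1}(x_{j})+2$, not at the observed $c_{2}$-values you propose to enumerate; you would also miss the all-negative option $\tau=\infty$, so your empirical maximizer need not attain $\inf_{s}\widehat{\mathrm{err}}(s)$. Algorithm~\ref{alg:linear}'s candidate set $s_{i}=\max\bigl(c_{2}(X)\cap[c_{1}(x_{i}),\,c_{1}(x_{i})+2]\bigr)$ together with $s_{m+1}=\infty$ exists exactly to snap these breakpoints to realizable thresholds without changing the empirical error (Claim~\ref{claim:1}). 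Once these fixes are made---pure $c_{2}$-threshold output, the corrected objective, and the correct candidate set---your argument becomes the paper's proof of Theorem~\ref{thm:separable} specialized to $\cH=\{h\}$, i.e., Corollary~\ref{thm:nonunifseparable}.
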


Our main theorem (and the non-uniform corollary) can be extended to a more general class of cost functions, which are obtained by
taking the minimum of $k$ separable cost functions.  We state only the uniform version here, the non-uniform version follows similarly. 

\begin{theorem}[Informal]\label{thm:informalkmin}
Let $\cH$ be a concept class that is learnable from $m$ examples up to error $\epsilon$ and
confidence $1-\delta$, and let $\cS^{(k)}$ be the
class of minima of $k$ separable cost functions.  
Then, there is a uniform strategy-robust learning
algorithm for $(\cH,\cS^{(k)})$ with sample complexity
$\poly(m,k,1/\epsilon,\log(1/\delta))$ and running time 
$\poly(m,\exp(k),1/\epsilon,\log(1/\delta)).$
\end{theorem}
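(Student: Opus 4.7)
The plan is to reduce the $k$-min separable case to $2^k$ invocations of the algorithm guaranteed by Theorem~\ref{thm:informal}, together with standard uniform-convergence arguments to handle the requirement that the guarantee hold simultaneously for all $h \in \cH$.

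First, I would extract from the proof of Theorem~\ref{thm:informal} not just the existence of an efficient strategy-robust learner for a single separable cost $c(x,y) = \max\{0, c_2(y) - c_1(x)\}$, but also the structural statement underlying it: there is a polynomial-size parametric family $\cF_c$ (essentially thresholds on $c_2$) which contains a classifier whose payoff is within $\epsilon$ of $\OPT_h(\cD,c)$, and whose payoff can be estimated from $\poly(m,1/\epsilon,\log(1/\delta))$ samples uniformly over $h \in \cH$. The single-separable analysis exploits the fact that Contestant's best response under a separable cost reduces to a threshold condition $c_2(\Delta(x)) < c_1(x) + 2$, so the gamed-accept region has a threshold structure on $c_1$.

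For $c = \min_{i \in [k]} c^{(i)}$ with each $c^{(i)}$ separable, Contestant best-responds by choosing both a target $y$ and a channel $i \in [k]$ with $c^{(i)}_2(y) - c^{(i)}_1(x) < 2$. Consequently, the gamed-accept region of any $f$ decomposes as $A_f = \bigcup_{i=1}^k A_f^{(i)}$, where $A_f^{(i)}$ is the region accepted under $c^{(i)}$ alone. The structural lemma I would prove is that, up to error $\epsilon$, there exists a near-optimal classifier $f^*$ whose accept set can be described by choosing, for each $i$ in some subset $S \subseteq [k]$, a single threshold $\tau_i$ on $c_2^{(i)}$, and accepting any $y$ that meets the $i$-th threshold for some $i\in S$. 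The algorithm then enumerates all $2^k$ subsets $S$; for each $S$, for each $i \in S$ it runs the single-separable learner on $c^{(i)}$ to propose a candidate threshold, assembles the resulting OR-classifier, and estimates its empirical payoff; finally it outputs the best. This gives the stated $\poly(m, \exp(k), 1/\epsilon, \log(1/\delta))$ running time, and the total candidate classifier count is $2^k \cdot \poly(m)$, so standard VC/union-bound reasoning (together with the $m$-sample learnability of $\cH$) yields the uniform-over-$\cH$ guarantee with sample complexity $\poly(m,k,1/\epsilon,\log(1/\delta))$.

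The main obstacle is the structural lemma itself: showing that one need not enumerate $m^k$ combinations of per-channel thresholds but only $2^k$ subsets with $k$ thresholds each chosen quasi-independently. I expect this to follow from a greedy or exchange argument: starting from any near-optimal $f^*$, one can replace its accept set on channel $i$ by the accept set induced by a single threshold $\tau_i$ on $c_2^{(i)}$ without decreasing payoff by more than $\epsilon/k$, because the contribution of channel $i$ to $A_{f^*}$ is governed by the separable-cost analysis from Theorem~\ref{thm:informal}. Iterating over $i \in [k]$ costs at most $\epsilon$ total, giving the claimed decomposition. Handling the interaction between channels (points reachable via multiple channels, and the requirement that a single published $f$ must simultaneously realize all the chosen thresholds) is the delicate technical step; I would address it by carefully defining the accept set as the preimage under the chosen thresholds and verifying that Contestant's best response indeed recovers the intended $A_{f^*}^{(i)}$ per channel.
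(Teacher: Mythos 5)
There is a genuine gap, and it sits exactly at your structural lemma. Your starting observation is correct: the gamed-accept region decomposes as $\Gamma_c(f)=\bigcup_{i=1}^k \Gamma_{c^{(i)}}(f)$. But each $\Gamma_{c^{(i)}}(f)$ depends on $f$ only through the single number $\mu_i(f)=\min\inset{c_2^{(i)}(y) \suchthat f(y)=1}$, and one published accept set pins down all $k$ of these minima \emph{simultaneously}. The classifier that realizes a prescribed vector $(\mu_1,\dots,\mu_k)$ is the \emph{AND} of the per-channel thresholds, i.e.\ $f(y)=1$ iff $c_2^{(i)}(y)\geq \mu_i$ for every $i$ --- this is precisely the canonical form in the paper's proof (Algorithm~\ref{alg:minsep}; the ``intersection of two affine halfspaces'' picture in Figure~\ref{fig:minsep}), where one checks $\min\inset{b_2(y)\suchthat f'(y)=1}=\min\inset{b_2(y)\suchthat f(y)=1}$ for all $b$, hence $\Gamma(f')=\Gamma(f)$. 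Your \emph{OR} form does the opposite: a point accepted because it clears threshold $\tau_i$ imposes no constraint on $c_2^{(j)}$ for $j\neq i$, so $\mu_j$ of an OR accept set can collapse to $\min_X c_2^{(j)}$, and then by your own union formula essentially every $x$ with $c_1^{(j)}(x)>\mu_j-2$ games in through channel $j$ at cost below $2$. Concretely, with two linear channels $c^{(i)}(x,y)=(y_i-x_i)_+$ in the plane and positives only where both coordinates are large, the AND classifier with thresholds $(T,T)$ admits only negatives with some coordinate above $T-2$, while any OR classifier thresholding channel $1$ accepts points with arbitrarily small second coordinate, letting nearly all negatives in via channel $2$. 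So the family of OR classifiers need not contain anything close to $\OPT_h(\cD,c)$, and no greedy/exchange argument charging $\eps/k$ per channel can rescue it: the defect is the shape of the accept set, not the placement of thresholds.

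A second, related gap is the claim that the thresholds can be chosen ``quasi-independently'' by $k$ runs of the single-channel learner. Even in the correct AND form, the error on negatives is governed by the union $\bigcup_b\inset{x \suchthat b_1(x)>\vec{s}_b-2}$, which does not decompose across channels: lowering one threshold changes which mistakes the others are charged for, so per-channel optimization need not find a jointly good vector. This is why the paper performs a joint grid search over all $(m+1)^{|\cB|}$ sample-induced threshold vectors --- the source of the $O(m^{|\cB|})$, i.e.\ $\poly(m,\exp(k))$, running time --- while uniform convergence over this grid is cheap, contributing only the $\sqrt{|\cB|\ln(m+1)/m}$ term in Theorem~\ref{lem:minsep} and hence sample complexity $\poly(m,k,1/\eps,\log(1/\delta))$. (Your $2^k$ subset enumeration is also unnecessary in the correct form: ``channel $i$ inactive'' is just its threshold placed at the bottom of $c_2^{(i)}(X)$, already on the grid.) Your $2^k\cdot\poly(m)$-time scheme would be a genuine improvement over $m^k$ if it were sound, but both of its pillars --- the OR canonical form and the independent per-channel thresholds --- fail.
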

Theorem~\ref{thm:informalkmin} applies to a broad class of cost functions:
it is not hard to see that any cost 
function on a finite domain~$X$ can be written as a minimum of separable cost
functions.  
Of course, the sample complexity in Theorem~\ref{thm:informalkmin} depends on $k$, the number of cost functions involved.
For general cost functions, $k$ grows with $|X|$ and might be quite large.  However, many spaces
admit a more efficient representation---for instance, if the cost function defines a metric that
admits a small $\epsilon$-net, $k$ depends only on the size of the net. 
Thus, $k$ is a parameter that interpolates nicely
between tractable cases where $k$ is small and the general case where $k$ is
unrestricted. 

The fact that the sample complexity in Theorem~\ref{thm:informalkmin} might be large is unavoidable:
for general cost functions, we have the following negative result.  

\begin{theorem}[Informal]
There is a class of metrics $\cal S$ such that, unless P = NP, there is no efficient strategy-robust learning algorithm for ${\cal S}$ that achieves expected payoff within $\epsilon = 1/|X|^\eta$ of the optimum, for any constant $\eta > 0$.
\end{theorem}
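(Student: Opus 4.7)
The plan is to exhibit, for a suitable class $\cS$ of metrics, a polynomial-time reduction from a constant-gap NP-hard problem such as gap-Max-3SAT (which by the PCP theorem is NP-hard to approximate with an additive gap of $\Omega(m)$ in $m$ clauses) to the problem of computing $\OPT_h(\cD, c)$ for instances in $\cS$. A constant additive gap in the resulting payoff is enough: the theorem's bound $\epsilon = 1/|X|^\eta$ is, for any fixed $\eta > 0$ and sufficiently large $|X|$, smaller than any positive constant, so constant-gap inapproximability (with an NP-hard family of arbitrarily large instances, padded if necessary) immediately implies the claimed result.

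Before giving the reduction it helps to recast $\OPT_h(\cD, c)$ combinatorially. Because $c$ is a metric, $c(x, x) = 0 < 2$, and for any $f$ with $S = f^{-1}(+1)$ the contestant best-responds by moving $x$ to some $y \in S$ iff there exists such a $y$ with $c(x, y) < 2$ (in which case $f(\Delta(x)) = +1$); otherwise $f(\Delta(x)) = -1$. Let $G_c$ be the graph on $X$ with edge $xy$ iff $c(x, y) < 2$, and let $N[S]$ denote the closed neighborhood in $G_c$. Writing $T = h^{-1}(+1)$ and $U = h^{-1}(-1)$, the strategic maximum under a uniform $\cD$ is
\[
\frac{1}{|X|} \max_{S \subseteq X} \Big( |T \cap N[S]| + |U \setminus N[S]| \Big).
\]
To realize an arbitrary graph $G_c$ while guaranteeing $c$ is a metric, we set $c(x,y) = \min(2, d_{G_c}(x, y))$, where $d_{G_c}$ is the unweighted shortest-path distance. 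This is symmetric, nonnegative, zero only on the diagonal, and the triangle inequality is preserved by clamping. Crucially, $c(x, y) < 2$ iff $xy \in E(G_c)$, so $G_c$ is literally the ``distance-$<2$'' graph and the above reformulation applies.

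Given a gap-Max-3SAT instance with variables $x_1, \dots, x_n$ and clauses $C_1, \dots, C_m$, the gadget $G_c$ has: a positively labeled clause vertex $c_j$ for each clause; negatively labeled literal vertices $t_i, f_i$ for each variable, non-adjacent to each other; a negatively labeled consistency vertex $p_i$ adjacent to both $t_i$ and $f_i$; and a private negatively labeled dummy vertex $q_j$ adjacent only to $c_j$. Each $c_j$ is adjacent to the three literal vertices corresponding to its literals and to $q_j$. In the YES case, taking $S$ to be the true literals of a satisfying assignment covers every $c_j$ (gain $m$), covers every $p_i$ exactly once (unavoidable unit loss per variable), leaves the complementary literals and every dummy $q_j$ uncovered, and achieves payoff $(2m + n)/(3n + 2m)$. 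In the NO case one argues that for any $S$ at least $\epsilon_0 m$ clauses cannot be covered via literals alone under any consistent assignment, so covering them forces either $c_j$'s into $S$ (each costing a constant number of spurious $U$-inclusions because of $q_j$ and the three literal neighbors) or leaves them uncovered, producing an additive deficit of $\Omega(\epsilon_0 m)$ out of $\Theta(n+m)$, i.e.\ a constant-gap drop in payoff.

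The main obstacle is showing that no unintended strategy beats the honest assignment-based one: putting $p_i \in S$, putting both $t_i$ and $f_i$ into $S$, or putting $c_j \in S$ directly. The dummy vertex $q_j$ is designed precisely to make the ``cheat'' of using $c_j \in S$ cost a constant number of $U$-points, so that a NO instance cannot recoup more than an $\epsilon_0$-fraction of the lost payoff no matter which combination of these moves it makes. Once these case analyses are balanced (tuning the relative weight of the gadget components in $\cD$ if needed), the constant additive payoff gap $\Omega(\epsilon_0)$ yields, by the argument in the first paragraph, $1/|X|^\eta$ inapproximability for every constant $\eta > 0$ unless $\mathrm{P} = \mathrm{NP}$.
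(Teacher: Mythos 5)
Your high-level framing is sound and matches the paper's strategy: the reformulation of the strategic optimum as $\max_{S\subseteq X}\bigl(|T\cap N[S]|+|U\setminus N[S]|\bigr)$ over the ``distance $<2$'' graph is correct, realizing an arbitrary graph via a two-valued truncated metric is exactly what the paper does (it uses values $1.5$ and $2.5$, which is slightly safer than your $\min(2,d_G)$ since cost exactly $2$ creates a tie that your construction resolves only via the tie-breaking convention), and your dummy vertex $q_j$ does correctly neutralize the cheats $c_j\in S$ and $q_j\in S$ (each is net-neutral or worse). But the step you flag as ``the main obstacle'' is precisely where the gadget breaks, and no correct argument is supplied: your consistency vertex $p_i$ charges only \emph{one} extra misclassified point for putting both $t_i$ and $f_i$ into $S$ (losses $2+1$ instead of $1+1$), while doubling a variable can cover \emph{arbitrarily many} clause vertices. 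Concretely, take clauses $(x_1\vee C'_j)$ for half the instance and $(\bar x_1\vee D'_j)$ for the other half, where $\{C'_j\}$ and $\{D'_j\}$ are hard gap-Max-2SAT instances on disjoint variables: every consistent assignment misses $\Omega(\eps_0 m)$ clauses, so this is a legitimate NO instance of gap-Max-3SAT, yet $S=\{\text{true literals of a good assignment}\}\cup\{t_1,f_1\}$ covers every clause vertex at the price of a single extra unit, giving payoff $2m+n-1$ versus the YES payoff $2m+n$. The constant additive gap you need in the NO case is therefore false for this gadget, and reweighting $\cD$ does not obviously fix it, since making $p_i$ heavy penalizes honest single assignments by the same amount.

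The paper sidesteps this entirely by not attempting a constant gap. It reduces from \emph{exact} 3SAT (satisfiable vs.\ unsatisfiable), which only requires a gap of one ``unit'' in a carefully weighted population, and it enforces consistency combinatorially rather than by a per-variable tax: it introduces one point $Q_{ikj\ell}$ for each \emph{non-contradictory} pair of literal occurrences in distinct clauses, plus pair points $P_{ij}$ of weight $2K$ and a guard point $R$ of weight $KM$, so that the accounting term $K\bigl(|B|(m-1-\tfrac1m)+2|A|\bigr)$ is positive only when exactly one literal occurrence is selected per clause and the $\binom{m}{2}$ selected pairs are pairwise compatible --- i.e., only when the formula is satisfiable. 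The resulting gap, roughly $K-9\binom{m}{2}$ out of $|X|=\Theta(Km^2)$, is inverse-polynomial rather than constant, and choosing $K=m^{2/\eta}$ makes it exceed $|X|^{-\eta}$, which is all the theorem claims. If you want to rescue your PCP-based route, you would need occurrence-level consistency gadgets whose penalty scales with the number of clauses a doubled variable can rescue (e.g., one consistency point per contradictory pair of occurrences, as in the paper), at which point you have essentially reconstructed the paper's construction.
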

Recall that a
distance function is a {\em metric} if it is non-negative, symmetric, and satisfies the triangle 
inequality.
This result is an immediate corollary of the fact (which we will prove in Section~\ref{sec:npcomplete}) that approximating the strategic 
maximum for metrics is NP-complete. 

\subsubsection{Experimental evaluation}

We experimentally evaluate our framework on real data from a Brazilian social
network called Apontador. The data set deals with instances of review spam and
was recently studied in the context of spam fighting~\cite{CostaMBB14}. 
Classification of spammers is a natural setting for our methods, because spammers 
will of course try to game any automated attempt to identify them.
We model a cost
function that roughly reflects the loss in revenue that a spammer experiences
when changing certain attributes. For instance, when a spam message contains
a URL pointing to malware, it is costly for the spammer to remove this URL
from his message as his message loses its intended purpose. Acknowledging that
the modeling of a cost function can never be perfectly realistic, we evaluate
our approach while explicitly taking into account several types of modeling
inaccuracies. Specifically, we only assume that our cost function is roughly
correct and that the amount of gaming is possibly below or above the
threshold predicted by our theoretical framework. Our empirical observations
demonstrate that even in the presence of significant modeling errors and only a
small amount of gaming, our algorithm already outperforms a standard SVM
classifier. Complementing our robustness analysis, we explore an approach for
creating hybrid classifiers that interpolate between our classifier and
standard classifiers that aren't by themselves strategy-robust. We observe that
such hybrids often achieve an excellent trade-off between resilience to gaming
and classification accuracy.

\subsection{Related work}

The deterioration of prediction accuracy due to unforeseen events is often
described as \emph{concept drift} and arises in a number of contexts.
A sequence of works on \emph{adversarial learning} is motivated by the
question of learning in the presence of an adversary that tampers with the
examples of a learning algorithm. 
Typical application examples in this line
of work include intrusion detection and spam fighting. 
Early works considered
zero-sum games~\cite{Dalvi04} which are not very applicable to our problem as
there are almost always cases where the payoff should be high for both players
(e.g, a good student being admitted to a good college). 
More recent work
considers alternative game-theoretic notions~\cite{BS09,BS11,BKS12,GSBS13}. 
The most closely related is the work by Br\"uckner and Scheffer~\cite{BS11},
which considered a Stackelberg competition for adversarial learning. 
A notable difference with our setup is that they define the equilibrium with respect to
the sample, while we define it with respect to the underlying distribution. 
Our definition requires us to provide generalization bounds.
Beyond this difference, Br\"uckner and Scheffer focus on learning centered
linear classifiers when the Euclidean squared norm is the cost function.
The Euclidean norm is not separable and so our results are incomparable.
Stackelberg competitions have also been studied extensively in the context of
security games~\cite{KYKCT11,KCP10}.

\section{Separable cost functions}

\newcommand{\err}{{\mathop{\rm err}}}
\renewcommand{\thresh}[2]{#1[#2]}
\newcommand{\oneoverm}{{\mbox{$\frac{1}{m}$}}}

We begin by studying the class of \em separable \em cost functions, which arise
naturally in the context of gaming.  
To motivate the definition, recall the example of   
the school board which wants to exploit the correlation between parents' books
and students' performance.  In this (admittedly rather stylized) example, the
cost to \ct from moving from a household $x \in X$ with $50$ books to a
household $y \in X$ with $100$ books is simply the cost of the the additional
books. 

More generally, this logic applies to any situation where \ct can assign a
cost to each state $x \in X$, independently of how it was reached.  If the
cost of a state $x$ is $g(x)$, then the cost to \ct of moving from $x$ to $y$
is simply any additional cost: $c(x,y) = \max\inset{ 0, g(y) - g(x) }$.
For example, suppose that \jr is designing a spam filter, and \ct wishes to
send an email. Independently of the spam
filter, \ct wants his message to serve a purpose such as advertising or
distributing malware. We can assign a score $g(x)$ to each message in $x\in X$
that expresses how much utility the spammer experiences when this message is
delivered without being classified as spam. For example, a message is significantly 
less useful for the spammer after the URL pointing to malware has been
removed. The expression $\max\{0,g(y)-g(x)\}$ then captures the loss in
utility (or expected revenue) when moving from $x$ to $y.$ We will return to
this example in detail in our experimental evaluation in
Section~\ref{sec:experiments}.


With these examples in mind, we define a \em separable \em cost function as follows.
\begin{definition}\label{def:sep}
A cost function $c(x,y)$ is called separable if it can be written as 
\[ c(x,y) = \max \inset{ 0, c_2(y) - c_1(x)}, \]
for functions $c_1,c_2: X \to \R$ satsifying $c_1(X) \subset c_2(X)$.
\end{definition}
Above, the term ``separable" is a slight abuse of terminology, because the cost
function cannot be negative, and because of the assumption about $c_1(X) \subset c_2(X)$; a truly ``separable" function would be of the form
$c_2(y) - c_1(x)$, for arbitrary $c_1,c_2$.  However, we will stick with it for simplicity of
exposition.  
The two extra conditions are natural for cost functions.  The maximum with $0$ ensures that the cost function is non-negative.  The condition $c_1(X) \subset c_2(X)$ means that there is always a $0$-cost option (that is, Contestant can opt not to game, and can pay nothing).

Another important special case of a separable cost functions are linear cost functions of the form
\[ c(x,y) = \ip{ \alpha }{(y - x)}_{+}, \]
for $\alpha \in \R^n$.  With this cost function, each attribute can be increased independently at some linear cost, and can be decreased for free.  
For our arguments that follow, a linear cost function is helpful for intuition.


Our main result is that for separable cost functions, 
there is a nearly optimal algorithm for \jr, with a uniform guarantee.
The sample complexity and running time of this algorithm depend on the Rademacher complexity of the class $\cH$ of classifiers.
\begin{definition}
For a class $\mathcal{F}$ of functions $f:X \to \R$, the Rademacher complexity of $\mathcal{F}$ with sample size $m$ is defined as
  \[ R_m(\mathcal{F}): = \EE_{x_1,\ldots,x_m \sim \cD} \EE_{\sigma_1,\ldots, \sigma_m} 
  \bigg[ \sup\bigg\{ \oneoverm \sum_{i=1}^m \sigma_i f(x_i)  \,:\, f \in \mathcal{F} \bigg\} \bigg]~, \]
 where $\sigma_1,\ldots,\sigma_m$ are i.i.d. Rademacher random variables.
 \end{definition}
 Our algorithm, given below as Algorithm \ref{alg:linear}, has the following uniform
guarantee.
 \begin{theorem}\label{thm:separable}
Suppose the cost function $c$ is separable, i.e., 
$c(x,y) = \max\{0,c_2(y) - c_1(x)\}$ and $c_1(X) \subseteq c_2(X)\mper$
Let $\cH$ be a concept class, and let $\cD$ be a distribution.
Let $m$ denote the number of samples in Algorithm \ref{alg:linear}, and suppose
\[R_m(\cH) 
  + 2 \sqrt{\mbox{$ \frac{\ln(m+1)}{m} $}}
  +   \sqrt{\mbox{$ \frac{ \ln(2/\delta)}{8m} $}} \leq \frac{\eps}{8}~ . 
 \]
Under these conditions, with probability at least $1 - \delta$,
\eqref{eq:nearopt} holds for all $h \in \cH$.
\end{theorem}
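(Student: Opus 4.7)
The plan is to reduce Jury's search to a one-dimensional threshold problem on the scalar feature $c_1(x)$, and then apply a uniform convergence argument whose complexity is controlled by $\cH$ together with the univariate-threshold class.

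I would first establish the structural reduction. For any classifier $f : X \to \{-1,+1\}$, let $T_f := \min\{c_2(y) : f(y) = 1\}$. A case analysis on $f(x)$ (and on whether $c_2(x) \le c_1(x)$) together with the containment $c_1(X) \subseteq c_2(X)$ — which guarantees Contestant always has a zero-cost fallback — shows that $f(\Delta(x)) = 1$ precisely when $c_1(x) > T_f - 2$, modulo behavior on the single ``special'' point Jury uses to realize the minimum $T_f$. Conversely, for any $\tau$ in the image of $c_2$, Jury can pick some $y^\star$ with $c_2(y^\star) = \tau$ and set $f$ to accept only $y^\star$, thereby realizing the threshold classifier $g_\tau(x) := \mathrm{sign}(c_1(x) - (\tau - 2))$ as the induced function $x \mapsto f(\Delta(x))$, up to a singleton. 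Hence
\[
\mathrm{OPT}_h(\cD, c) \;=\; \sup_{\tau \in \R} \Pr_{x \sim \cD}\bigl[h(x) = g_\tau(x)\bigr],
\]
and Jury's strategy space collapses to a single scalar threshold.

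Given the reduction, Algorithm~\ref{alg:linear} simply requests $m$ labeled samples, sorts them by $c_1(x_i)$, and picks the empirical-risk minimizing threshold $\hat\tau$; the output classifier is the single-point acceptor associated with $\hat\tau$. This is a trivial one-dimensional optimization and, crucially, never invokes an algorithmic learner for $\cH$ — which is the reason efficiency survives even when $\cH$ is computationally hard to PAC-learn. For the generalization half, the relevant loss class is
\[
\cF \;=\; \bigl\{ x \mapsto \Ind{h(x) \neq g_\tau(x)} \,:\, h \in \cH,\ \tau \in \R \bigr\}.
\]
Standard symmetrization followed by McDiarmid gives a uniform-deviation bound of order $R_m(\cF) + \sqrt{\ln(2/\delta)/(8m)}$ with probability at least $1 - \delta$. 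I would bound $R_m(\cF)$ by $R_m(\cH)$ plus the Rademacher complexity of univariate thresholds on $c_1$, and then invoke Massart's lemma on the growth function $m+1$ of one-dimensional thresholds to bound the latter by $2\sqrt{\ln(m+1)/m}$. Tracking constants recovers exactly the error budget in the theorem hypothesis, and since the bound is uniform over $h \in \cH$ and $\tau$, \eqref{eq:nearopt} holds simultaneously for every $h \in \cH$.

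The main obstacle is the structural step: carefully verifying the threshold characterization of $f(\Delta(\cdot))$ in all corner cases — especially the tie-breaking rule when Contestant is indifferent at cost exactly $2$, the subtle situation in which $f(x) = 1$ but $c(x,x) > 0$ so Contestant still prefers to move, and confirming that the ``special'' point $y^\star$ used to realize $T_f$ contaminates the induced classifier only on a singleton and so affects neither the reduction nor the sample-based estimation. The role of the inclusion $c_1(X) \subseteq c_2(X)$ is central throughout and must be invoked explicitly to ensure that every desired threshold is realizable by some $f$.
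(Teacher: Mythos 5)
Your high-level route is the same as the paper's: characterize $\{x : f(\Delta(x))=1\}$ as a threshold set on $c_1$ (using the zero-cost fallback supplied by $c_1(X)\subseteq c_2(X)$), collapse \jr's strategy space to one-dimensional thresholds, run ERM over thresholds induced by the sample, and prove uniform convergence by splitting the Rademacher complexity into $R_m(\cH)$ plus a Massart-type bound of $2\sqrt{\ln(m+1)/m}$ for the univariate threshold class, plus a McDiarmid term. However, your structural step contains a genuine error: the identity $\OPT_h(\cD,c)=\sup_{\tau\in\R}\Pr_{x\sim\cD}[h(x)=g_\tau(x)]$ is false. Since $\min\{c_2(z) : f(z)=1\}$ always lies in $c_2(X)$, only thresholds of the form $\tau-2$ with $\tau\in c_2(X)\cup\{\infty\}$ are realizable as induced classifiers, and the sup over all real $\tau$ can strictly exceed $\OPT$. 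Concretely: let $c_1$ take values $0$ and $1$, each with mass $1/2$, let $c_2(X)=\{0,1\}$, and let $h=+1$ exactly on $\{x : c_1(x)=1\}$. Every $f$ that accepts any point has $T_f-2\leq -1$, so the induced classifier accepts \emph{everyone}; hence $\OPT = 1/2$, while $\tau=2.5$ would give your sup the value $1$. Relatedly, you misattribute the role of $c_1(X)\subseteq c_2(X)$: in the paper it is used only to guarantee the zero-cost fallback in the best-response characterization (Claim~\ref{claim:notasobviousasyouthink}); it does \emph{not} make arbitrary thresholds realizable, and nothing does.

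This gap propagates into your algorithmic step: sorting by $c_1(x_i)$ and taking the empirically best threshold at sample values may yield a $\hat\tau$ outside the image of $c_2$, so your ``single-point acceptor'' $y^\star$ need not exist; and even after restricting to $c_2(X)\cup\{\infty\}$, you must still argue that finitely many candidates attain the infimum of the empirical error over all realizable thresholds. The paper repairs exactly this by snapping, setting $t_i:=c_1(x_i)$ and $s_i:=\max\inparen{c_2(X)\cap[t_i,t_i+2]}$ (with $s_i=\infty$ when the intersection is empty, plus the all-reject candidate $s_{m+1}=\infty$), and its Claim~\ref{claim:1} verifies that $\widehat{\mbox{\rm err}}$ over $\{s_1,\dots,s_{m+1}\}$ exhausts all values of $\widehat{\mbox{\rm err}}(s)$ for $s\in c_2(X)\cup\{\infty\}$. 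With the sup restricted to realizable thresholds and your ERM replaced by this snapped candidate set, the rest of your argument (contraction, Massart over $m+1$ candidates, McDiarmid) goes through exactly as in the paper. Incidentally, your worry about the ``special point'' is vacuous: by the same characterization, $y^\star\in\Gamma(f)$ if and only if $c_1(y^\star)>c_2(y^\star)-2$, so the threshold description is exact with no singleton exception --- which is fortunate, because under a general $\cD$ a singleton can carry positive probability mass, and ``up to a singleton'' would not otherwise be a harmless fudge.
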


Notice that Theorem~\ref{thm:separable} indeed implies the ``informal" version, Theorem~\ref{thm:informal}.
That is, if $\cH$ is statistically learnable (i.e., $R_m(\cH)$ decays inversely polynomially with $m$ for all distributions $\cD$, 
or sufficiently that the VC dimension of $\cH$ is bounded\footnote{Indeed, if $d$ is the VC dimension of $\cH$, we have
\[ R_m(\cH) \leq \sqrt{ \frac{2d \log(em/d) }{m}}\]
for all distributions $\cD$ (notice that $R_m(\cH)$ depends on $\cD$).}), 
then Algorithm~\ref{alg:linear} is a efficient, uniform strategy-robust learning algorithm for $\cH$.

It is worth pointing out that Algorithm \ref{alg:linear} is
\emph{computationally} efficient as long as $\cH$ has low sample complexity---even if $\cH$ itself is not 
computationally efficiently learnable!
As we mentioned above, the proof of Theorem~\ref{thm:separable} also implies that our algorithm satisfies the following non-uniform guarantee. 
\begin{corollary}\label{thm:nonunifseparable}
Suppose the cost function $c$ is separable.
Let $m$ denote the number of samples in Algorithm \ref{alg:linear}, and suppose that
\[  2 \sqrt{\mbox{$ \frac{\ln(m+1)}{m} $}}
  +   \sqrt{\mbox{$ \frac{ \ln(2/\delta)}{8m} $}} \leq \frac{\eps}{8}~ . 
 \]
Then with probability at least $1 - \delta$,
\eqref{eq:nearopt} holds for all distributions~$\cD$.
In particular, Algorithm~\ref{alg:linear} is an efficient (non-uniform) strategy-robust learning algorithm.
\end{corollary}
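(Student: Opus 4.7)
The plan is to derive Corollary \ref{thm:nonunifseparable} as a direct specialization of Theorem \ref{thm:separable} to the degenerate concept class $\cH = \{h\}$. Note that Algorithm \ref{alg:linear} itself never uses a concept class: it consumes only the cost description $c$ and the labeled samples $(x_i,h(x_i))$ drawn from $\cD$. The role of $\cH$ in Theorem \ref{thm:separable} is purely analytic---it appears in the Rademacher term $R_m(\cH)$, which is the overhead for uniform convergence across all possible targets. In the non-uniform setting, $h$ is fixed before any samples are drawn, so that overhead disappears.

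More concretely, the first step is to re-inspect the proof of Theorem \ref{thm:separable} and locate the single place where $R_m(\cH)$ enters. This term originates in a standard symmetrization argument used to show that, simultaneously for every $h\in\cH$ and every candidate threshold classifier considered by Algorithm \ref{alg:linear}, the empirical strategic payoff is close to the true strategic payoff. Specializing to $\cH=\{h\}$ collapses the supremum over $\cH$ in the Rademacher complexity, and by symmetry of the $\pm1$-valued Rademacher signs we get $R_m(\{h\}) = \E_\sigma\bigl[\tfrac{1}{m}\sum_i \sigma_i h(x_i)\bigr] = 0$. Hence the $R_m(\cH)$ summand simply vanishes from the sample-complexity condition.

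What remains in the bound are the two terms $2\sqrt{\ln(m+1)/m}$ and $\sqrt{\ln(2/\delta)/(8m)}$. These account for the uniform deviation over the finite family of threshold classifiers that Algorithm \ref{alg:linear} searches---thresholds on $c_2$ at the $O(m)$ distinct values realized on the sample---and have nothing to do with $\cH$. They carry over unchanged. Thus, whenever $m$ satisfies the hypothesis of Corollary \ref{thm:nonunifseparable}, the analysis of Theorem \ref{thm:separable} goes through verbatim for the single fixed pair $(h,\cD)$, yielding \eqref{eq:nearopt} with probability at least $1-\delta$. Efficiency is also inherited: the running time of Algorithm \ref{alg:linear} is a polynomial in the sample size $m$ and in the cost of evaluating $c$, with no dependence on any concept class, and the stated condition on $m$ is itself polynomial in $1/\eps$ and $\log(1/\delta)$.

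The only real obstacle is bookkeeping: one must verify that the proof of Theorem \ref{thm:separable} cleanly decomposes so that $R_m(\cH)$ is the \emph{only} quantity that refers to $\cH$, and that no other step implicitly relies on $|\cH|>1$ (for instance, that the reduction from strategic payoff to a threshold-selection problem on $c_1(x), c_2(x)$ does not invoke a union bound over $\cH$). Assuming this separation holds---which the structural form of the bound in Theorem \ref{thm:separable} strongly suggests---the corollary follows with no new ideas required.
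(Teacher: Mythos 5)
Your proposal is correct and matches the paper's own argument exactly: the paper derives Corollary~\ref{thm:nonunifseparable} from Theorem~\ref{thm:separable} by setting $\cH = \inset{h}$ and observing that $R_m(\inset{h}) = 0$, precisely as you do. Your additional bookkeeping check---that $\cH$ enters the proof of Theorem~\ref{thm:separable} only through the Rademacher term---is a reasonable sanity check but adds nothing beyond what the paper's one-line proof already relies on.
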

Corollary~\ref{thm:nonunifseparable} follows from Theorem~\ref{thm:separable} by setting $\cH = \inset{h}$, the singleton containing the fixed target classifier $h$.  Indeed, in this case $R_m(\cH) = 0$.

Before proving Theorem~\ref{thm:separable}, we state the algorithm and discuss the intuition behind it. 
In Figure \ref{fig:linear}, we illustrate the idea for a linear cost function, $c(x,y) = \ip{\alpha}{y-x}_+$.  
Because moving perpendicularly to $\alpha$ is free for \ct, 
\jr may as well choose a classifier $f$ that accepts some affine halfspace whose normal is equal to $\alpha$ 
(see Figure \ref{fig:linear}).  
Thus, the only issue is finding the correct shift for this halfspace.  
Because the calculated shift can only be based on samples, we choose the shift that is empirically the best. 
The latter can be calculated quickly because it is a one-dimensional problem. 


For a more general separable cost function 
\[ c(x,y) = \max\{0, c_2(y) - c_1(x)\}~,\]
 by the same argument,
 \jr may as well return a classifier $\thresh{c_2}{t}$ of the form:
\[  \thresh{c_2}{t}(x) = \begin{cases} 
             ~~1 & \text{if}~~ c_2(x) \geq t \\ 
              -1 & \text{if}~~ c_2(x) < t 
              \end{cases} ~~~~~~~~~~~~~~~~(x\in X)\]
for some $t$.
Algorithm \ref{alg:linear} gives the details, and we proceed with the proof below.

\begin{figure}
\begin{center}
\begin{tikzpicture}[scale=.9]
\begin{scope}
\clip (-4,-1) rectangle (4,4);
\draw[fill=gray!20,draw=white] (-3,5) -- (5, -3) -- (4,4) -- (-3,4);
\draw (-2,4)--(4,-2);
\draw[fill=gray] (3,3) circle (2.8cm);
\draw[dashed] (-4,4) -- (2,-2);
\draw[xshift=-2cm,yshift=0cm,->] (0,0) to node[left,pos=.8, node distance=.5cm] {$\alpha$} (.5,.5);
\draw[dashed] (-6, 4) -- (2, -4);
\draw[dashed] (-4, 0) -- (0,-4);
\node at (2.5,2.5) {$f$};
\node at (-.5,3.5) {$f'$};
\end{scope}
\node[draw,circle, fill=black, scale=.3, label={$y$}] at (1,1) {};
\node[draw,circle, fill=black, scale=.3, label={$x$}] at (0,0) {};
\node[draw,circle, fill=black, scale=.3, label={$x'$}] at (-1,1) {};
\end{tikzpicture}
\caption{Suppose the optimal classifier for \jr is $f$ (which accepts the dark gray region), and the cost function is $c(x,y) = \ip{\alpha}{y-x}_+$.  
Because moving perpendicular to $\alpha$ is free for \ct, then the payoff for \jr if she plays $f'$ (shown above, which accepts the light gray region) is the same as her payoff if she plays $f$.
Indeed, suppose that the agent $x$ shown above would be willing to move to $y$ to get accepted by $f$.  Then $x'$ would also be willing to move to $y$, because the cost is the same.
Thus, \jr may restrict his or her search to classifiers $f'$ that accept all points in some affine halfspace whose normal is equal to $\alpha$.}
\label{fig:linear}
\end{center}
\end{figure}
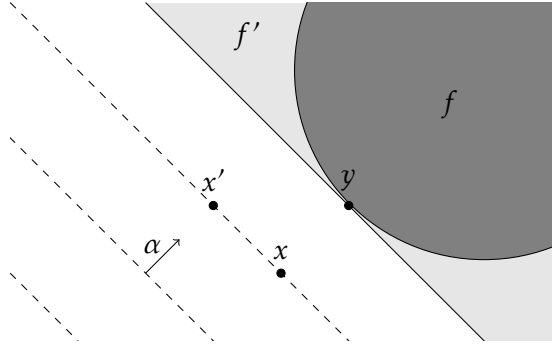

\begin{algorithm}
\textbf{Inputs:}{ Labeled examples $(x_1,h(x_1)), \ldots, (x_m, h(x_m))$ from $x_i \sim \cD$ i.i.d..  Also, a description of a separable cost function $c(x,y) = \max\{0,c_2(y) - c_1(x)\}$.  }

For $i =1,\ldots,m$, let
 \begin{align*} t_i &:= c_1(x_i) \\
	s_i &:= \begin{cases} \max\inparen{ c_2(X) \cap [t_i,t_i+2] }  & c_2(X) \cap [t_i,t_i+2] \neq \emptyset \\ \infty & c_2(X) \cap [t_i,t_i+2] = \emptyset ~. \end{cases} 
\end{align*}
For convenience, set $s_{m+1} = \infty$.
 
Compute
\[\widehat{\mbox{\rm err}}(s_i) := \oneoverm \sum_{j=1}^m  \ind{ h(x_j)  \neq \thresh{c_1}{s_i - 2}(x_j) }.\]

Find $i^*$, $1\le i^* \le m+1$, that minimizes $\widehat{\mbox{\rm err}}(s_i)$.

\textbf{Return:}{ 
$f := \thresh{c_2} {s_{i^*}}$. 
}
\caption{$\cA$: gaming-robust classification algorithm for separable cost functions}
\label{alg:linear}
\end{algorithm}
\begin{remark}[Input to Algorithm~\ref{alg:linear}]\rm
Algorithm \ref{alg:linear} takes a cost function $c(x,y)$ 
$= \max\{0,c_2(y) - c_1(x)\}$ as an input, and it returns some threshold function based on $c_2$.  We have been a little sloppy about how exactly $c$ should be represented.  A quick inspection of the algorithm shows that in order to compute the threshold, $\A$ needs only black-box access to $c_1$, and enough access to $c_2$ to determine $c_2(X) \cap [t_i, t_i+2]$.  In order to return the classifier $f$, $\A$ additionally needs whatever access to $c_2$ it is expected to return.  For example, if we only ask that $\A$ be able to provide black-box access to $f$, then black-box access to $c_2$ suffices for this step.  If we ask that $\A$ return a short description of $f$, then a short description of $c_2$ suffices for this step.  
\end{remark}

\begin{proof}[Proof of Theorem \ref{thm:separable}]
%
Assume for simplicity that the cost function satisfies $c(x,y) \neq 2$, for all $x,y\in X$.  
First, for any mapping  $f:X \to \{-1,1\} $, define
\begin{align*}
 \Gamma(f) 
  &:=  \big\{\, x \suchthat \max\{f(y)\,:\, c(x,y) < 2 \} = 1 \,\big\} \\ 
  &=   \big\{\, x \suchthat (\exists y \in X)(f(y) = 1,~ c(x,y) < 2)\, \big\} \\ 
  &=   \big\{\, x \suchthat c_1(x) > \min\{ c_2(y)\,:\, f(y) = 1\}  - 2\, \big\}~. 
 \end{align*}
\begin{claim}\label{claim:notasobviousasyouthink}
 $\Gamma(f)$ is the set of $x \in X$ such that $f(\Delta(x)) = 1$ when $\Delta$ is a best response of \ct.   
\end{claim}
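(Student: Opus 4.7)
The plan is to verify both directions of the set equality: show that if $x \in \Gamma(f)$ then a best-responding \ct arrives at a point with $f(\Delta(x)) = 1$, and conversely. The key observation driving both directions is that moving to a ``$1$-point'' is worth at most a cost of $2$ to \ct, since flipping the output from $-1$ to $+1$ increases the raw payoff $f(y)$ by exactly $2$.

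For the forward direction, assume $x \in \Gamma(f)$, so there exists $y^* \in X$ with $f(y^*) = 1$ and $c(x,y^*) < 2$. Then \ct's best-response payoff is at least $f(y^*) - c(x,y^*) = 1 - c(x,y^*) > -1$. On the other hand, any $y$ with $f(y) = -1$ yields payoff $-1 - c(x,y) \leq -1$, since $c$ is non-negative. Therefore the maximum in \eqref{eq:bestresponse} cannot be attained at any point with $f$-value $-1$, and so $f(\Delta(x)) = 1$.

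For the reverse direction, assume $f(\Delta(x)) = 1$, so \ct's best-response payoff equals $1 - c(x,\Delta(x))$. Here I use the assumption $c_1(X) \subseteq c_2(X)$ built into the definition of separability: for every $x$ there exists some $y^{(0)} \in X$ with $c_2(y^{(0)}) = c_1(x)$, and therefore $c(x,y^{(0)}) = \max\{0, c_2(y^{(0)}) - c_1(x)\} = 0$. This gives \ct a zero-cost fallback option of payoff $f(y^{(0)}) \geq -1$. Combining with optimality of $\Delta(x)$ yields $1 - c(x,\Delta(x)) \geq -1$, i.e., $c(x,\Delta(x)) \leq 2$. Using the simplifying assumption that $c(x,y) \neq 2$ for all $x,y$, we upgrade this to the strict inequality $c(x,\Delta(x)) < 2$, so $\Delta(x)$ itself witnesses that $x \in \Gamma(f)$.

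The argument is essentially routine once one notes the two ingredients above; the only mildly subtle point is the reverse direction, where one must exhibit a lower bound of $-1$ on \ct's best-response payoff. Without the condition $c_1(X) \subseteq c_2(X)$ this lower bound could fail (if every alternative carried cost exceeding $2$, \ct might end up at a $+1$-point paying more than $2$, which would place $x$ outside $\Gamma(f)$). The separability convention rules this out by guaranteeing a zero-cost fallback, so I expect no further obstacles.
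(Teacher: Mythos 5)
Your proof is correct and follows essentially the same route as the paper's: the forward direction via the observation that a reachable $+1$-point yields payoff strictly above $-1$ while any $-1$-point yields at most $-1$, and the reverse direction via the zero-cost fallback guaranteed by $c_1(X) \subseteq c_2(X)$. If anything, you are slightly more explicit than the paper in flagging where the simplifying assumption $c(x,y) \neq 2$ is needed to upgrade $c(x,\Delta(x)) \leq 2$ to a strict inequality, which the paper leaves implicit.
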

\begin{proof}
 Indeed, for $x \in \Gamma(f)$, there exists some $y$ such that $f(y) = 1$, so that the payoff to \ct when he plays $\Delta(x) = y$ is equal to $1 - c(x,y) > -1$.  
 On the other hand, suppose that \ct plays $\Delta(z) \in X$ for some with $f(z) \not= 1$.  Then the best payoff of \ct is equal to $-1 - c(x,z) \leq -1$, because $c(x,z) \geq 0$.  
 So, the best response of \ct is to choose $\Delta(x) = y$ for some $y$ with $f(y) = 1$.
This establishes that $\Gamma(f) \subseteq \inset{ x \in X \suchthat f(\Delta(x)) = 1 }$.

For the other direction, suppose that $f(\Delta(x)) = 1$.  Then there is some $y \in X$ so that
\[1 - c(x,y) > -1 - \min_{z \in X} c(x,z) = -1, \]
using from the definition of separability that $c_1(X) \subseteq c_2(X)$, and hence for all $x$, 
\[\min_{z \in X} c(x,z) = \min_{z \in X} \max \inset{ 0, c_2(z) - c_1(x) } = 0. \]
In particular, $c(x,y) < 2$, and so $x \in \Gamma(x)$.  This establishes that
\[ \inset{ x \in X \suchthat f(\Delta(x)) = 1 } \subseteq \Gamma(f), \]
and proves the claim.
\end{proof}
Claim~\ref{claim:notasobviousasyouthink} is the only place in the proof where we need either of the extra conditions in Definition~\ref{def:sep} (that $c(x,y) \geq 0$ and $c_1(X) \subseteq c_2(X)$).

 Given this characterization of $\Gamma(f)$, we next argue that we may replace $f$ by a much more structured function $f'$ so that $\Gamma(f) = \Gamma(f')$; in particular, the payoff to \jr under $f$ will be the same as under $f'$, and so we can restrict our attention to these more structured functions.
 For any $f$, let
\begin{equation}\label{eq:simplerform}
 f'(y) = \begin{cases}~~ 1 & \text{if}~~ c_2(y) \geq \min \{ c_2(z)\,:\, f(z) = 1 \}   \\ 
       -1                & \text{otherwise~.} 
          \end{cases}
\end{equation}
 Then we have
 
 \begin{align*}
   \Gamma(f) &=  \inset{\, x \suchthat c_1(x) > \min\{ c_2(y) \,:\, f(y) = 1 \,\} - 2 \,} \\
 &= \inset{ x \suchthat c_1(x) > \min\{ c_2(y)\,:\,  f'(y) = 1 \,\}  - 2 \,} \\
 &= \Gamma(f')~.
 \end{align*} 
 In particular, for any true classifier $h \in \cH$, the payoff to \jr if she plays $f$ is the same as if she plays $f'$:   
\begin{align*}
   {\mathbb{P}} \left\{ h(x)\right.   =  \left. \max\{\, f(y)\,:\, c(x,y)<2\, \} \, \right\} 
  =&\  \PR{x \in \inparen{ \Gamma(f) \triangle \inset{ y\,:\, h(y) = 1 } }^c \,} \\
  =&\  \PR{x \in \inparen{ \Gamma(f') \triangle \inset{ y\,:\, h(y) = 1 } }^c \,}  \\
  =&\  \PR{\, h(x) = \max \{\, f'(y) \,:\, c(x,y) < 2\,\}\, } ~.
 \end{align*}
Above, $\triangle$ denotes symmetric difference.
Thus, it suffices to consider classifiers of the form of \eqref{eq:simplerform}. 
That is, our classifier may as well be equal to
$\thresh{c_2}{s}$, for some $s \in c_2(X) \cup \inset{\infty}$, where $s$ plays the role of 
$\min\{\, c_2(z)\,:\, f(z) = 1 \,\} $, 
and $s = \infty$ means that there is no $z$ such that $f(z) = 1$.
Let
\[ S := c_2(X) \cup \inset{\infty} \]
 be the set of these relevant values of $s$.  Recall the definition of $s_i$ from Algorithm~\ref{alg:linear}.
 For $s \in S$, we have\footnote{ As usual, $\infty - 2 = \infty$. }
 \[ \Gamma(\thresh{c_2}{s}) = \inset{ \,x \suchthat c_1(x) > s - 2\, }~. \]
The best possible payoff to \jr is obtained by finding the best threshold $s$, i.e.,
 \[  OPT_h(\cD, c) = 1 - \inf \{\, \err(s) \,:\, s \in S\, \}~,  \]
where $\err(s) :=  \PR{ h(x) \neq \thresh{c_1}{s-2}(x) }.$
In Algorithm~\ref{alg:linear}, \jr returns $f = \thresh{c_2}{s_{i^*}}$, 
and as above
the payoff to \jr from this $f$ is equal to
\[ \PR{ h(x) \neq \thresh{c_1}{s_{i^*} - 2}(x) } = 1 - \err(s_{i^*})~. \]
Thus, to prove Theorem~\ref{thm:separable}, it suffices to show that for all $h \in \cH$,
\begin{equation}\label{eq:want}
 \err(s_{i^*}) ~\leq~ \inf \{\, \err(s)\,:\, s\in S\, \} + \eps~. 
\end{equation}
To establish this, we first observe that there is no loss of generality in Algorithm~\ref{alg:linear} 
by considering only the $s_i$, $i=1,\ldots,m+1$, where as in Algorithm~\ref{alg:linear} we set $s_{m+1} = \infty$. 
 \begin{claim}\label{claim:1}
 \begin{align*}
  \widehat{\mbox{\rm err}}(s_{i^*})  
  =&\ \min \{\,\widehat{\mbox{\rm err}}(s_i)\,:\, i=1,\ldots,m+1\,\}  \\
  =&\ \inf \{\,\widehat{\mbox{\rm err}}(s)\,:\, s \in S\, \}~.
  \end{align*}
 \end{claim}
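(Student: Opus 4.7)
The first equality is immediate from the construction of Algorithm~\ref{alg:linear}: $i^*$ is chosen to minimize $\widehat{\mbox{\rm err}}(s_i)$ over $i \in \{1,\ldots,m+1\}$. So the plan reduces to proving the second equality. The ``$\geq$'' direction is easy because $\{s_1,\ldots,s_{m+1}\} \subseteq S$ (each $s_i$ is either $\infty$ or, by construction, the maximum of a nonempty subset of $c_2(X)$), hence $\min_i \widehat{\mbox{\rm err}}(s_i) \geq \inf_{s \in S} \widehat{\mbox{\rm err}}(s)$.

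For the ``$\leq$'' direction I will show that for every $s \in S$ there exists $i$ with $\widehat{\mbox{\rm err}}(s_i) = \widehat{\mbox{\rm err}}(s)$. The point is that $\widehat{\mbox{\rm err}}(s)$ depends on $s$ only through the set $J(s) := \{j : c_1(x_j) \geq s - 2\}$ of training indices that the threshold $\thresh{c_1}{s-2}$ classifies as~$+1$, so it suffices to exhibit $i$ with $J(s_i) = J(s)$.

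If $s = \infty$, or if $s$ is finite but $J(s) = \emptyset$, then $i = m+1$ works, since $J(s_{m+1}) = J(\infty) = \emptyset$. Otherwise, writing $t_j := c_1(x_j)$, pick $i^* \in \argmin\{t_j : j \in J(s)\}$; I claim $J(s_{i^*}) = J(s)$. Two basic facts drive the argument: (a) $s_{i^*} - 2 \leq t_{i^*}$ by the definition of $s_{i^*}$; and (b) $s_{i^*} \geq s$. For (b), if $s \geq t_{i^*}$ then $s \in c_2(X) \cap [t_{i^*}, t_{i^*}+2]$ (using $s \leq t_{i^*}+2$, which follows from $i^* \in J(s)$), so $s_{i^*} \geq s$ by maximality; if instead $s < t_{i^*}$, then the hypothesis $c_1(X) \subseteq c_2(X)$ gives $t_{i^*} \in c_2(X) \cap [t_{i^*}, t_{i^*}+2]$, whence $s_{i^*} \geq t_{i^*} > s$. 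With (a) and (b) in hand, for $j \in J(s)$ the minimality of $t_{i^*}$ combined with (a) gives $t_j \geq t_{i^*} \geq s_{i^*} - 2$; conversely, for $j \in J(s_{i^*})$, fact (b) gives $t_j \geq s_{i^*} - 2 \geq s - 2$. Hence $J(s) = J(s_{i^*})$, proving $\widehat{\mbox{\rm err}}(s) = \widehat{\mbox{\rm err}}(s_{i^*})$.

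The only mildly delicate step is (b): even though $s$ may lie outside the interval $[t_{i^*}, t_{i^*}+2]$ that defines $s_{i^*}$, a short case split using $c_1(X) \subseteq c_2(X)$ pins down the comparison. Everything else is routine bookkeeping about which training points a given threshold accepts.
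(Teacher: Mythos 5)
Your proof is correct and takes the same route as the paper's, which disposes of the second equality in one line by asserting that $\widehat{\mbox{\rm err}}(s)$ changes only when some $\thresh{c_1}{s-2}(x_j)$ flips and that ``by construction'' every attainable value is realized at the points $s_1,\ldots,s_{m+1}$; your argument---choosing the index with $t_j$ minimal among the accepted sample points and verifying $J(s_{i})=J(s)$ via facts (a) and (b)---is precisely the detailed verification of that ``by construction'' assertion. One worthwhile byproduct of your case split in (b): it shows the claim genuinely relies on $c_1(X)\subseteq c_2(X)$ (without it, one could have $J(s)\neq\emptyset$ while $c_2(X)\cap[t_j,t_j+2]=\emptyset$ for every accepted $j$, forcing all candidates $s_i=\infty$ and making the second equality false, e.g.\ when $c_1\equiv 10$, $c_2\equiv 0$, $h\equiv+1$), so the paper's remark that Claim~\ref{claim:notasobviousasyouthink} is the \emph{only} place the extra conditions of Definition~\ref{def:sep} are needed is slightly imprecise, and your write-up makes that hidden dependence explicit.
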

 \begin{proof}
The first equality is just the definition of $i^*$.  
The second equality follows from the fact that
 \[ \widehat{\mbox{\rm err}}(s) = \oneoverm \sum_{j=1}^m \ind{ h(x_j) \neq \thresh{ c_1 }{ s - 2 }(x_j) } \] 
 only changes when $\thresh{ c_1}{s-2}(x_j)$ changes for some $j$. 
Thus, by construction, this sum takes on every possible value (as $s$ ranges over $S = c_2(X) \cap \inset{\infty}$) at the points $s_i$,
$i=1,\ldots,m+1$.
 \end{proof}  
  \begin{claim}\label{claim:2}
  With probability at least $1 - \delta$,  for all $h \in \cH$ and for all $s \in S$,
 \[ 
\inabs{ \widehat{\mbox{\rm err}}(s) - \err(s) }~ \leq~ 4R_m(\cH) 
      + 8 \sqrt{ \mbox{$\frac{\ln(m+1)}{m}$}} 
      +   \sqrt{ \mbox{$\frac{ 2 \ln(2/\delta)}{m}$} }~.
\]
In particular, under the conditions of Theorem~\ref{thm:separable}, with probability at least $1 - \delta$,
\[ \sup \left\{\, \inabs{ \widehat{\mbox{\rm err}}(s) - \err(s)\,} \,:\, h \in \cH,~ s \in S \,\right\} ~\leq~ \eps/2~. \]
 \end{claim}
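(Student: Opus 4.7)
The plan is to establish the uniform convergence bound via the standard three-step template of McDiarmid concentration, symmetrization, and a Rademacher complexity estimate tailored to the fact that $\cT := \{\thresh{c_1}{s-2} : s \in S\}$ is a one-dimensional threshold class.

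First, I will apply McDiarmid's bounded-differences inequality to $\Phi := \sup_{h \in \cH,\, s \in S} |\widehat{\mbox{\rm err}}(s) - \err(s)|$, viewed as a function of the i.i.d.\ sample $x_1,\ldots,x_m$. Since the $0$--$1$ loss $\ind{h(x) \neq g_s(x)}$ takes values in $\{0,1\}$, changing a single $x_j$ perturbs $\Phi$ by at most $1/m$, yielding $\Phi \leq \E \Phi + \sqrt{\ln(2/\delta)/(2m)}$ with probability at least $1 - \delta$; this supplies the concentration term in the claim.

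Next, I apply the standard symmetrization inequality to bound $\E \Phi \leq 2\, \E \sup_{h,s} \frac{1}{m} \sum_j \sigma_j \ind{h(x_j) \neq g_s(x_j)}$, where $g_s := \thresh{c_1}{s-2}$ and $\sigma_j$ are i.i.d.\ Rademacher. Using the identity $\ind{h \neq g_s} = (1 - h g_s)/2$, the constant $1/2$ term has zero Rademacher expectation, and the problem reduces to controlling the Rademacher average of the product class $\cH \cdot \cT$.

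The key step is this product bound. Since each $g_s$ is a threshold on the scalar $c_1(x)$, $\cT$ realizes at most $m+1$ distinct sign patterns on any $m$-sample; let $g^{(1)},\ldots,g^{(m+1)}$ be representatives. Then $\sup_{s \in S}$ collapses to $\max_{k \leq m+1}$. For each fixed $k$, the variables $\tau_j^{(k)} := \sigma_j g^{(k)}(x_j)$ are jointly i.i.d.\ Rademacher conditional on the sample (since $g^{(k)}(x_j) \in \{\pm 1\}$ acts as a deterministic sign-flip), so
\[ \E_\sigma \sup_h \tfrac{1}{m} \sum_j \sigma_j h(x_j) g^{(k)}(x_j) = \E_\tau \sup_h \tfrac{1}{m} \sum_j \tau_j h(x_j), \]
which in expectation over the sample equals $R_m(\cH)$. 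To move from ``each $k$'' to ``max over $k$,'' I will use a Massart-style bound for the maximum of sub-Gaussian random variables: each $Z_k := \sup_h \tfrac{1}{m}\sum_j \sigma_j h(x_j) g^{(k)}(x_j)$ has bounded differences $O(1/m)$ in $\sigma$, is thus sub-Gaussian with variance proxy $O(1/m)$, and so $\E \max_k Z_k \leq \max_k \E Z_k + O(\sqrt{\ln(m+1)/m})$. Collecting the various factors of $2$ produces the bound $4R_m(\cH) + 8\sqrt{\ln(m+1)/m} + \sqrt{2\ln(2/\delta)/m}$ claimed. The second, $\epsilon/2$, assertion of the claim is then immediate by substituting the hypothesis of Theorem~\ref{thm:separable}.

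The main obstacle is the product-class step: $R_m(\cH \cdot \cT)$ does not split additively in general, so I cannot simply bound it by $R_m(\cH) + R_m(\cT)$. The fix exploits that $\cT$ lives on the single scalar $c_1(x)$, so its growth function on any $m$-sample is only $m+1$; this replaces $\sup_{s \in S}$ by a maximum over a small set of representatives and pays only a logarithmic penalty beyond $R_m(\cH)$ alone.
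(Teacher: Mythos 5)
Your proof is correct, but at the crucial step it takes a genuinely different route from the paper's. The paper controls the product class $\mathcal{X} = \inset{h \cdot \thresh{c_1}{s-2} \suchthat h \in \cH,\, s \in S}$ by writing $h \cdot g = \inabs{h+g} - 1$ for $\pm 1$-valued functions and invoking the Ledoux--Talagrand contraction principle, which yields the additive split $R_m(\mathcal{X}) \leq 2\inparen{R_m(\cH) + R_m(\mathcal{Y})}$; it then bounds $R_m(\mathcal{Y})$ exactly as you do, by noting the threshold class realizes at most $m+1$ patterns on the sample and applying a finite-class (Chernoff/union-bound, i.e.\ Massart-type) estimate. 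You dispense with contraction entirely: conditioning on the sample, you collapse $\sup_{s \in S}$ to a max over the $m+1$ representatives, observe that multiplication by a fixed $\pm 1$ pattern is a measure-preserving sign flip of the Rademacher vector (so each $Z_k$ has mean exactly the empirical Rademacher average of $\cH$, independent of $k$ --- which is what makes your centering step harmless), and pay only a sub-Gaussian maximal-inequality penalty of order $\sqrt{\ln(m+1)/m}$ for the max, justified by the $2/m$ bounded differences of $Z_k$ in $\sigma$. Your route buys slightly sharper constants (roughly $2R_m(\cH)$ where the paper gets $4R_m(\cH)$, comfortably within the claimed bound) and is self-contained, whereas the paper's contraction-based split is more modular, decoupling $\cH$ from the threshold class before estimating either; both arguments ultimately rest on the same structural fact that one-dimensional thresholds have growth function $m+1$. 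Two small bookkeeping points, neither a gap: your symmetrization step should formally carry an absolute value (or a sup over the class and its negation), and the constant extracted from $\ind{h \neq g_s} = (1 - h g_s)/2$ must be pulled outside the supremum before its Rademacher expectation is declared zero --- both are standard and absorbed by the slack in the claim's constants, and your McDiarmid front end together with the final substitution of the hypothesis of Theorem~\ref{thm:separable} matches what the paper imports wholesale via its citation of Boucheron--Bousquet--Lugosi.
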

 \begin{proof}
 Writing out the definition of $\widehat{\mbox{\rm err}}$ and $\err$, we need to bound the absolute value of the difference
\begin{align*}
    \widehat{\mbox{\rm err}   }(s) -  \err(s) 
  &=  \oneoverm \sum_{j=1}^m \ind{ h(x_j) \neq \thresh{c_1}{s-2}(x_j) } 
  - \EE_{x \sim \cD} \inbrak{ \ind{ h(x) \neq \thresh{c_1}{s-2}(x)} }
\end{align*}
simultaneously for 
all $h \in \cH$, $s \in S$.
 By standard arguments (see, for example, Theorem 3.2 in~\cite{BBL05}), for all $h \in \cC, s \in S$,
 \begin{equation}\label{eq:std}
 |\, \widehat{\mbox{\rm err}   }(s) -  \err(s)  \,|
  ~ \leq ~ 2 R_m(\mathcal{X}) 
   + \sqrt{\mbox{$ \frac{ 2\ln(2/\delta) }{m}$ } }~,
  \end{equation}
 where 
 \[ \mathcal{X} = \inset{\, h \cdot \thresh{c_1}{s-2} \suchthat h \in \cH, s \in S\, }~.\]
Thus, it suffices to control the Rademacher complexity of $\mathcal{X}$, which is in turn controlled by 
\begin{equation}\label{eq:additive}
R_m(\mathcal{X}) \leq 2 \inparen{ R_m( \cH) + R_m(\mathcal{Y})}, 
\end{equation}
 where $\mathcal{Y} = \inset{ \thresh{c_1}{s-2} \suchthat s \in S}$.
 Note that, because all the functions in $\mathcal{H} \cup \mathcal{Y}$ are $\pm 1$-valued,  
 \[ h(x) \cdot \thresh{c_1}{s-2}(x) = | \,h(x) + \thresh{c_1}{s-2}(x)\,| - 1  \]
 for every $x$.
 Inequality \eqref{eq:additive} follows from a contraction principle (see, e.g., Theorem 4.2 in~\cite{LT91}) and the definition of the Rademacher complexity.

It remains to bound $R_m(\mathcal{Y})$.  
  Fix $x_1,\ldots, x_m \in X$ and sign flips $\sigma_i \in \{-1,1\}$.  As in the proof of Claim~\ref{claim:1},
all of the values that $\sum_{i=1}^m \sigma_i \thresh{c_1}{s-2}(x_i)$ takes on as $s$ ranges over $S$ 
are attained at $\{s_1,\ldots,s_{m+1}\}$.  
Thus, for fixed $x_1,\ldots,x_m \in X$, using a Chernoff bound and the union bound, and integrating to bound the expectation,  we obtain
 \begin{align*}
 \EE_{\sigma}\big[& \sup \big\{  \oneoverm \sum_{i=1}^m \sigma_i \thresh{c_1}{s-2}(x_i)\,:\, s\in S \big\} \big] \\
 &= \EE_\sigma \big[ \sup\big\{  \oneoverm \sum_{i=1}^m \sigma_i \thresh{c_1}{s_j-2}(x_i)\,:\,  j=1,\ldots,m+1  \big\}\big] \\
 & \leq 2 \sqrt{\mbox{$ \frac{\ln(m+1)}{m}$} }~.
 \end{align*}  
 Thus, we have
 \[ R_m(\mathcal{Y}) \leq 2\sqrt{\mbox{$\frac{\ln(m+1)}{m}$}}, \]
 and altogether inequality (\ref{eq:std}) implies that for all $h \in \cH$ and $s \in S$,
  \[ 
  \inabs{ \, \widehat{\mbox{\rm err}   }(s) -  \err(s) \, }
   \leq 4\inparen{ R_m(\mathcal{\cH}) + 2 \sqrt{\mbox{$\frac{\ln(m+1)}{m}$}}\,} + \sqrt{ \mbox{$\frac{ 2\ln(2/\delta) }{m}$} }~,
   \] 
 which completes the proof of the claim.
 \end{proof}
 Claims \ref{claim:1} and \ref{claim:2} establish Theorem \ref{thm:separable}.  
 Indeed, we have, with probability at least $1 - \delta$, for all $h \in \cC$,
 \begin{align*}
 \err(s_{i^*}) 
  \leq &\ \widehat{\mbox{\rm err}}(s_{i^*}) + \eps/2  \\
     = &\ \inf \, \left\{ \widehat{\mbox{\rm err}}(s)\,:\, s\in S \right\} + \eps/2 \\
  \leq &\ \inf \, \{  \err(s)\,:\, s \in S \} + \eps~,
 \end{align*} 
establishing inequality \eqref{eq:want} and completing the proof.
\end{proof}

\section{General cost functions}

While separable cost functions are quite reasonable, they do not capture everything.  
In this section, we consider more general cost functions.  
We extend Algorithm \ref{alg:linear} to work for a cost function that is the minimum of an arbitrary set of separable cost functions.  
This is a much broader class.
In fact, \em every \em cost function can be represented as the minimum of separable cost functions, although not necessarily very parsimoniously.
\begin{proposition} \label{prop:minqs}
Let $X$ be any finite set and let $c: X \times X \to \R$ be any mapping. Suppose
\[ D \geq  \max \{ c(x,y)\,:\, x,y \in X  \} ~. \]
Under these conditions,
\[ c(x,y) = \min \inset{ c(w,z) + D\cdot \ind{ x\neq w } + D\cdot \ind{y \neq z} \,:\, w,z \in X }~. \]
\end{proposition}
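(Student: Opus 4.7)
The plan is to prove the equality by establishing the two inequalities $\leq$ and $\geq$ separately. For the $\leq$ direction, I would simply substitute $(w,z) = (x,y)$ into the expression on the right-hand side. Both indicators $\ind{x \neq w}$ and $\ind{y \neq z}$ then vanish, so the term reduces to $c(x,y)$ itself, which gives the upper bound on the minimum.

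For the $\geq$ direction, I would show that \emph{every} term indexed by $(w,z)$ in the minimum is at least $c(x,y)$. Fix an arbitrary $(w,z) \in X \times X$. If $(w,z) = (x,y)$, the term equals $c(x,y)$ trivially. Otherwise $w \neq x$ or $z \neq y$, so at least one of the two indicators equals $1$, making the term at least $c(w,z) + D$. Using that cost functions are nonnegative (so $c(w,z) \geq 0$) together with the hypothesis $D \geq \max\{c(x',y') : x',y' \in X\} \geq c(x,y)$, I would conclude $c(w,z) + D \geq D \geq c(x,y)$, as required.

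There is no real obstacle here; this is essentially a one-line case analysis. The only subtlety worth flagging is that the lower bound uses the implicit nonnegativity of $c$ in an essential way: what the argument actually needs is $D \geq c(x,y) - c(w,z)$ for all $x,y,w,z$, and this follows from $c \geq 0$ combined with $D \geq \max c$. In the cost-function setting of the paper this is automatic. As a closing remark, I would also note (since this is the motivation for stating the proposition in this section) that each term $c(w,z) + D\cdot \ind{x \neq w} + D\cdot \ind{y \neq z}$ fits the separable template of Definition~\ref{def:sep} with $c_1(x) = -D\cdot \ind{x \neq w}$ and $c_2(y) = c(w,z) + D\cdot \ind{y \neq z}$, since $c_2(y) - c_1(x) \geq 0$ everywhere; this is what makes the identity a decomposition of an arbitrary $c$ into a minimum of separable cost functions.
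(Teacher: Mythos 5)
Your proof is correct, and it is precisely the immediate two-inequality verification that the paper itself treats as self-evident: Proposition~\ref{prop:minqs} appears in the paper with no proof at all, so there is no alternative route to compare against. Your flagged subtlety is in fact the sharpest point here: as literally stated the proposition allows an arbitrary mapping $c: X \times X \to \R$, and then the identity can fail (take $X = \{a,b\}$ with $c(a,a) = 1$, $c(b,b) = -10$, $D = 1 \geq \max c$; the $(w,z) = (b,b)$ term at $(x,y) = (a,a)$ equals $-10 + 1 + 1 = -8 < 1$), so one genuinely needs $c \geq 0$ --- the paper's standing assumption on cost functions --- or, more generally, $D \geq \max c - \min c$, exactly as you observe. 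One small caveat on your closing remark: your decomposition $c_1(x) = -D\cdot \ind{x \neq w}$, $c_2(y) = c(w,z) + D\cdot \ind{y \neq z}$ does realize $c_{w,z}$ as $c_2(y) - c_1(x) \geq 0$, but it does not satisfy the range condition $c_1(X) \subset c_2(X)$ of Definition~\ref{def:sep} (for $D > 0$ and $|X| \geq 2$ one has $-D \in c_1(X)$ while every value of $c_2$ is at least $c(w,z) \geq 0$), an imprecision the paper itself glosses over when it declares each $c_{w,z}$ separable; this affects only the surrounding discussion, not the proposition you were asked to prove.
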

Since each of the cost functions $c_{w,z}(x,y) = c(w,z) + D\cdot \ind{x \neq w} + D\cdot \ind{y \neq z}$ is a separable cost function,
Proposition~\ref{prop:minqs} implies that any $c$ can be written as the minimum of $|X|^2$ cost functions. 
The sample complexity of our extension depends on the number of cost functions;
since $|X|$ may be quite large (possibly exponential in the parameter of interest), Proposition~\ref{prop:minqs} might not help.
However,  a smaller number of cost functions can be used if $X$ has nice geometric structure.
\begin{proposition}
Let $X$ be any finite set and let $c: X \times X \to \R$ be a metric. 
Let $S$ be an $\eps$-net of $X$: that is, for every $x \in X$, there is some $s \in S$ so that $c(x,s) \leq \eps$. 
Under these conditions, for every $x,y\in X$,
\[ c(x,y) 
 \leq \min \inset{ c(x,w) + c(w,z) + c(z,y) \,:\, w,z \in S }
 \leq c(x,y) + 4\eps~ . \]
 \end{proposition}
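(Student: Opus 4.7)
The proposition has two inequalities to establish, and both follow directly from the triangle inequality applied in opposite directions.

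For the lower bound $c(x,y) \leq \min_{w,z \in S} \{c(x,w) + c(w,z) + c(z,y)\}$, I would simply invoke the triangle inequality twice: for every pair $w,z \in S$, we have $c(x,y) \leq c(x,w) + c(w,y) \leq c(x,w) + c(w,z) + c(z,y)$. Taking the minimum over $w,z \in S$ preserves the inequality.

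For the upper bound, the plan is to exhibit specific witnesses $w^*, z^* \in S$ achieving a sum of at most $c(x,y) + 4\eps$. Using the $\eps$-net property, pick $w^* \in S$ with $c(x,w^*) \leq \eps$ and $z^* \in S$ with $c(y,z^*) \leq \eps$. Then by the triangle inequality applied in the other direction,
\[
c(w^*,z^*) \leq c(w^*,x) + c(x,y) + c(y,z^*) \leq c(x,y) + 2\eps,
\]
using symmetry of the metric. Summing the three terms gives
\[
c(x,w^*) + c(w^*,z^*) + c(z^*,y) \leq \eps + \bigl(c(x,y) + 2\eps\bigr) + \eps = c(x,y) + 4\eps,
\]
and since the minimum over $w,z \in S$ is at most the value at $(w^*,z^*)$, the claimed upper bound follows.

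There is no real obstacle here; the argument is just the standard ``net-replacement'' trick in metric spaces, and the only input used beyond the definitions is the triangle inequality and symmetry of $c$. The point of the proposition in context is that each function $c_{w,z}(x,y) := c(x,w) + c(w,z) + c(z,y)$ (for fixed $w,z$) is separable in the sense of Definition~\ref{def:sep} — indeed, it has the form $a(x) + b(y)$ with $a(x) = c(x,w) \geq 0$ and $b(y) = c(w,z) + c(z,y) \geq 0$, which fits the separable template with $c_1(x) = -a(x)$ and $c_2(y) = b(y)$ — so the minimum approximation reduces a general metric cost to a minimum of $|S|^2$ separable costs, feeding into Theorem~\ref{thm:informalkmin}.
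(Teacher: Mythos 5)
Your proof is correct and matches the intended argument: the paper states this proposition without any proof, and the justification is exactly your two directions of the triangle inequality --- the lower bound holding for arbitrary $w,z \in S$, and the upper bound via net points $w^*,z^*$ within $\eps$ of $x$ and $y$ together with symmetry, giving $c(x,w^*)+c(w^*,z^*)+c(z^*,y) \leq \eps + \inparen{c(x,y)+2\eps} + \eps$. One small caveat on your closing contextual remark (not part of the proposition itself): the decomposition $c_1(x) = -c(x,w)$, $c_2(y) = c(w,z)+c(z,y)$ does reproduce the values, since $\max\inset{0, c_2(y)-c_1(x)} = c(x,w)+c(w,z)+c(z,y)$ is automatically nonnegative, but it does not literally satisfy the side condition $c_1(X) \subseteq c_2(X)$ of Definition~\ref{def:sep} (the former set is nonpositive, the latter nonnegative), so these approximating pieces need not offer Contestant a zero-cost option; the paper's own decomposition in Proposition~\ref{prop:minqs} exhibits the same looseness, so this does not affect how the proposition feeds into Theorem~\ref{thm:informalkmin}, but it is worth noting if one wants the separable pieces to satisfy the definition verbatim.
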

Thus, when $c$ is a metric, our problem is very close to a problem where the cost function is the minimum of separable cost functions, and the number of cost functions we need to consider depends essentially on the covering number of the metric space $(X,c)$.

Algorithm \ref{alg:minsep} is an adaptation of Algorithm \ref{alg:linear} for cost functions of the form
\[ c(x,y) = \min  \{ b(x,y) \,:\,  b \in \mathcal{B} \} ~, \]
where each function $b\in \mathcal{B}$ is separable, i.e.,
\[ b(x,y) = \max\{0,b_2(y) - b_1(x)\} ~.\]
\begin{algorithm}[ht!]
\textbf{Inputs:}{ Labeled examples $(x_1,h(x_1)), \ldots, (x_m, h(x_m))$ from $x_i \sim \cD$ i.i.d..  Also, a description of $k$ separable cost functions $b(x,y) = \max\{0,b_2(y) - b_1(x)\}$ for $b \in \mathcal{B}$. }
 
For $i=1,\ldots,m$ and $b \in \mathcal{B}$, set
\begin{align*} 
t_{i,b} &= b_1(x_i) \\ 
s_{i,b} &= \begin{cases} 
\max\{ b_2(X) \cap [t_{i,b}, t_{i,b} + 2] \} &  \text{if}~~~b_2(X) \cap [t_{i,b}, t_{i,b} + 2 ] \neq \emptyset \\
\infty &  \text{if}~~~ b_2(X) \cap [t_{i,b}, t_{i,b} + 2] = \emptyset \end{cases}
\end{align*}
and set $s_{m+1,b} = \infty$ for all $b \in \mathcal{B}$.

 For each $\vec{s} \in \bigoplus_{b \in \mathcal{B}} \inset{ s_{i,b} : i=1,\ldots,m+1}$, compute
 \[ \widehat{\err}(\vec{s}) 
 := \oneoverm \sum_{j=1}^m \ind{ h(x_j) \neq 
   \min \{ \thresh{b_1}{\vec{s}_{b} - 2}(x_j) \,:\, b \in \mathcal{B}\} }. \]

Find a $\vec{s}^*$ that minimizes $\widehat{\err}(\vec{s})$.

\textbf{Return:}{ 
$f(x) = \min \{ \thresh{b_2}{\vec{s}^*_{b}}(x) \,:\, b \in \mathcal{B} \}$.
}
\caption{$\cA$: gaming-robust classification algorithm for minima of separable cost functions}
\label{alg:minsep}
\end{algorithm}
\begin{theorem}\label{lem:minsep}
Suppose the cost function $c$ is the minimum of separable functions,
\[ c(x,y) = \min \{ b(x,y)\,:\, b \in \mathcal{B}  \} ~,\]
where each $b:X \times X \to \R$ is separable.
Let $\cD$ be a distribution on $X$ and suppose that Algorithm \ref{alg:minsep} uses $m$ samples, so that $m$ satisfies 
\[R_m(\cH) 
+ 2 \sqrt{\mbox{$ \frac{|\mathcal{B}|\ln(m+1)}{m}  $}} 
+ \sqrt{\mbox{$ \frac{ \ln(2/\delta)}{8m} $} } ~\leq~\frac{\eps}{8}
~ . \]
Under these conditions, with probability at least $1 - \delta$, \eqref{eq:nearopt} holds for all $h \in \cH$ and for the distribution $\cD$.
The running time of Algorithm \ref{alg:minsep} is $O(m^{|\mathcal{B}|})$.
\end{theorem}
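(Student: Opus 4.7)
The plan is to mirror the proof of Theorem~\ref{thm:separable}, lifting the scalar threshold $s$ to a vector $\vec{s} = (s_b)_{b \in \mathcal{B}}$ of thresholds, one per separable piece of the cost function.

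First I would extend Claim~\ref{claim:notasobviousasyouthink} to the min-of-separable setting. Since $c(x,y) = \min_{b \in \mathcal{B}} b(x,y) < 2$ holds iff $b(x,y) < 2$ for some $b$, the gaming region of $f$ is a union,
\[ \Gamma(f) = \bigcup_{b \in \mathcal{B}} \bigl\{\,x \suchthat b_1(x) > s_b(f) - 2\,\bigr\}, \]
where $s_b(f) := \min\{\,b_2(y)\,:\, f(y) = 1\,\}$ (with $s_b(f) = \infty$ if no such $y$ exists). In particular, \jr's payoff depends on $f$ only through the vector $\vec{s}(f)$, so without loss of generality one may restrict attention to the structured family of classifiers parameterized by $\vec{s} \in \prod_b (b_2(X) \cup \{\infty\})$ that Algorithm~\ref{alg:minsep} enumerates.

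Next I would discretize, exactly as in Claim~\ref{claim:1}. The empirical error $\widehat{\err}(\vec{s})$ is coordinatewise piecewise constant: along coordinate $b$ it changes only at values where $\thresh{b_1}{s_b - 2}(x_j)$ flips for some sample $j$, and these breakpoints are exactly the $\{s_{1,b}, \ldots, s_{m+1,b}\}$ defined in the algorithm. Hence enumerating the grid $\bigoplus_b \{s_{i,b} : i\}$ of size $(m+1)^{|\mathcal{B}|}$ attains the empirical infimum over all $\vec{s}$, which also accounts for the $O(m^{|\mathcal{B}|})$ running time.

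The third and main technical step is uniform convergence of $\widehat{\err}(\vec{s})$ to $\err(\vec{s})$ over all $h \in \cH$ and all $\vec{s}$ in the parameter space. Following Claim~\ref{claim:2}, one applies the contraction principle to decompose the problem into bounds on $R_m(\cH)$ (controlled by hypothesis) plus $R_m(\mathcal{Y})$, where $\mathcal{Y}$ is the class of post-gaming decision rules indexed by $\vec{s}$. The key observation is that, restricted to any sample of size $m$, $\mathcal{Y}$ realizes at most $(m+1)^{|\mathcal{B}|}$ distinct sign patterns: each coordinate $b$ contributes at most $m+1$ threshold patterns, and these combine independently across coordinates. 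A Massart-style finite-class bound (Chernoff plus a union bound over the $(m+1)^{|\mathcal{B}|}$ grid elements, then integration in the Rademacher signs) then yields
\[ R_m(\mathcal{Y}) \leq 2 \sqrt{\frac{|\mathcal{B}| \ln(m+1)}{m}}, \]
matching the $\sqrt{|\mathcal{B}|}$-dependence in the hypothesis on $m$.

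The main obstacle is precisely this tightened Rademacher bound: a naive union over the $|\mathcal{B}|$ coordinates using the scalar bound on $R_m(\mathcal{Y})$ from Theorem~\ref{thm:separable} would yield a too-large $|\mathcal{B}| \cdot \sqrt{\ln(m+1)/m}$, which degrades the sample complexity quadratically in $|\mathcal{B}|$. Treating the parameter vector as a single index into a finite class of size $(m+1)^{|\mathcal{B}|}$ is essential to obtain the stated rate. Once uniform convergence at level $\eps/2$ is in hand, the conclusion follows by the familiar chain: letting $\vec{s}^\opt$ minimize $\err$,
\[ \err(\vec{s}^*) \leq \widehat{\err}(\vec{s}^*) + \eps/2 \leq \widehat{\err}(\vec{s}^\opt) + \eps/2 \leq \err(\vec{s}^\opt) + \eps, \]
which establishes~\eqref{eq:nearopt} for every $h \in \cH$ simultaneously with probability $1-\delta$.
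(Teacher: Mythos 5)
Your proposal is correct and follows essentially the same route as the paper's proof in Appendix~\ref{sec:minproof}: reduce to structured classifiers parameterized by a threshold vector $\vec{s} \in \bigoplus_{b \in \mathcal{B}}\inparen{b_2(X) \cup \inset{\infty}}$, discretize to the grid of $(m+1)^{|\mathcal{B}|}$ sample-induced breakpoints, and prove uniform convergence via contraction plus a Massart-style bound treating the grid as a single finite class, giving $R_m(\mathcal{Y}) \leq 2\sqrt{|\mathcal{B}|\ln(m+1)/m}$ exactly as in the paper's Claim~\ref{claim:22}. Your observation that indexing the whole vector into one class of size $(m+1)^{|\mathcal{B}|}$ (rather than union-bounding the scalar Rademacher bound over coordinates) is what yields the $\sqrt{|\mathcal{B}|}$ rate is precisely the mechanism the paper uses.
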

The intuition for Algorithm \ref{alg:minsep} is similar to that for Algorithm \ref{alg:linear}, and is illustrated in Figure \ref{fig:minsep} for the minimum of two linear cost functions.
The proof of Theorem \ref{lem:minsep} is also similar to that of Theorem \ref{thm:separable}; for completeness, we give it in Appendix~\ref{sec:minproof}.
\begin{remark}[Improvements for structured classes $\mathcal{B}$]
When the size of $\mathcal{B}$ is small, Theorem \ref{lem:minsep} gives a nice bound.  However, if $\mathcal{B}$ is large (as in our extreme example of the beginning of this section), these guarantees are not so good.  An inspection of the proof (in Appendix~\ref{sec:minproof}) shows that the term $\sqrt{ \frac{|\mathcal{B}|\ln(m+1)}{m} }$ may be replaced by $R_m(\mathcal{H})$, where
\[ \mathcal{H} = \inset{ \min_{b \in \mathcal{B}} \thresh{b_1}{\vec{s}_b - 2} \suchthat 
\vec{s} \in \bigoplus_{b \in \mathcal{B}} \inparen{ b_2(X) \cup \inset{\infty}} }. \]
For some sets $\mathcal{B}$ of separable cost functions, this may be much smaller.
\end{remark}
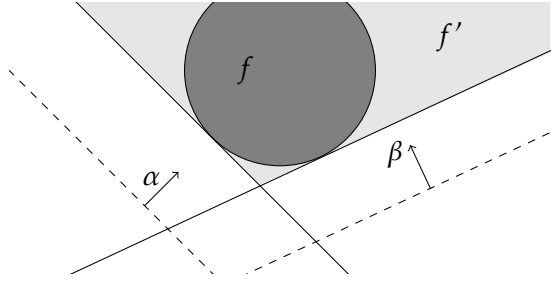
\begin{figure}
\begin{center}
\begin{tikzpicture}[scale=.9]
\begin{scope}
\clip (-4,-1) rectangle (4,3);
\draw[fill=gray!20,draw=white] (-5,5) -- (-.3, .3) -- (4,2.26) -- (4,4) --(-5,5);
\draw[fill=gray] (0,2) circle (1.41cm);
\draw(-4,4) -- (2,-2);
\draw[rotate=70,yshift=.55cm] (-4,4) -- (4,-4);
\draw[xshift=-2cm,yshift=0cm,->] (0,0) to node[left,pos=.8, node distance=.5cm] {$\alpha$} (.5,.5);
\draw[rotate=70,yshift=-2cm,xshift=1cm,->] (0,0) to node[left,pos=.8, node distance=.5cm] {$\beta$} (.5,.5);
\draw[dashed] (-6, 4) -- (2, -4);
\draw[rotate=70, xshift=-1cm,dashed] (-4, 4) -- (4,-4);
\node at (2.5,2.5) {$f'$};
\node at (-.5,2) {$f$};
\end{scope}
\end{tikzpicture}
\caption{Suppose the the optimal classifier for \jr is $f$ (which accepts the dark grey region), and the cost function is $c(x,y) = \min \inset{ \ip{\beta}{y-x}_+, \ip{\alpha}{y-x}_+}$.  
For the same reasoning as in Figure \ref{fig:linear}, the classifier $f'$ has the same payoff to \jr as $f$ does.  Thus, \jr may restrict his/her search to classifiers $f$ that are the intersections of two affine halfspaces.}
\label{fig:minsep}
\end{center}
\end{figure}

\section{NP-completeness}
\label{sec:npcomplete}
What happens when the cost function $c$ is not separable?  
It turns out that for general cost functions, 
any algorithm for \jr requires more
than polynomial time to obtain a near-optimum classifier, unless $P=NP$.
This holds true 
\begin{itemize}
\item[(a)]  even if the underlying distance function is a metric (another very natural class of cost function), and 
\item[(b)]  even if the learning algorithm were given correct labels $h(x)$ for \em all \em members $x \in X$
of the population,
\end{itemize} 
when the desired deviation $\eps$ is inverse-polynomially small and the distribution $\cD$ is uniform.
The above statements are consequences of the following result:
\begin{theorem}\label{thm:npcomplete}
Given a finite population $X$ with the uniform distribution, a metric
$c$ on $X$, and a 
target labeling $h: X\mapsto \{-1,+1\}$,  it is NP-hard to compute
the strategic optimum within $\epsilon = {1\over |X|^\eta}$ for any constant $\eta > 0$.
\end{theorem}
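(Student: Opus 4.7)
My plan is to give a gap-preserving reduction from the PCP-gap form of Max-3SAT to the strategic-optimization problem, producing a metric instance whose strategic optimum tracks the number of satisfiable clauses up to an additive constant shift.

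First I would reformulate the strategic maximum as a subset-selection problem. Using the remark following the definition of $\mathrm{OPT}_h(\cD, c)$, choosing a classifier $f\colon X\to\signs$ is equivalent to choosing the accept set $A = f^{-1}(+1) \subseteq X$, and under \ct's best response a point $x \in P := h^{-1}(+1)$ is correctly classified iff $A \cap B(x) \neq \emptyset$, while $x \in N := h^{-1}(-1)$ is correctly classified iff $A \cap B(x) = \emptyset$, where $B(x) = \{y : c(x,y) < 2\}$. So under the uniform distribution,
\[
\mathrm{OPT}_h(\cD, c) = \frac{1}{|X|}\max_{A \subseteq X}\Big(\big|\{x \in P : A \cap B(x) \neq \emptyset\}\big| + \big|\{x \in N : A \cap B(x) = \emptyset\}\big|\Big).
\]

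Next, from a bounded-occurrence 3-CNF formula $\varphi$ with $n$ variables and $m = \Theta(n)$ clauses (on which PCP gives a constant gap $\delta > 0$), I would construct a population $X$ as follows: one positive ``clause'' point $C_j$ for each clause; two negative ``literal'' points $T_i, F_i$ plus a constant-size cluster of additional negatives attached separately to each of $T_i$ and $F_i$; a constant number of positive ``anchor'' dummies attached to each pair $\{T_i,F_i\}$; and a constant number of negative ``shadow'' dummies attached to each $C_j$. The metric $c$ is the shortest-path metric in the resulting incidence graph (unit-weight edges, with $T_i \sim C_j$ iff $x_i$ appears positively in $C_j$ and $F_i \sim C_j$ iff $\neg x_i$ appears in $C_j$), capped at $2$ on all other pairs. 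Since every distance lies in $\{1,2\}$, the triangle inequality is automatic.

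The crux is that an optimal $A$ essentially arises from a truth assignment $\alpha$: let $A_\alpha$ contain $T_i$ (and its cluster) when $\alpha_i = \mathrm{True}$ and $F_i$ (and its cluster) when $\alpha_i = \mathrm{False}$. Under $A_\alpha$, each satisfied clause is covered through its witnessing literal, every anchor positive is covered, and the un-chosen literal, the un-chosen cluster, and all shadows stay uncovered. A short case analysis shows every other strategy is strictly worse, for suitable $O(1)$ dummy multiplicities: (i) because of the shadows, adding any $C_j$ to $A$ loses strictly more than it gains; (ii) because of the anchors, leaving a variable unassigned loses more positives than it frees negatives; (iii) because of the literal clusters, including both $T_i$ and $F_i$ wastes more negatives than the $O(1)$ extra clauses that bounded occurrence can contribute. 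Hence the strategic optimum equals $(\mathrm{sat}(\varphi) + c_0)/|X|$ for a fixed shift $c_0$ depending only on $n,m$ and the gadget sizes, and the $\delta m$ PCP gap becomes an $\Omega(1)$ additive gap in the normalized payoff. For every constant $\eta > 0$, $1/|X|^\eta \to 0$ as $|X| \to \infty$, so the constant gap exceeds $1/|X|^\eta$ for sufficiently large $|X|$, giving the desired NP-hardness. (When $\eta \geq 1$, $1/|X|^\eta \leq 1/|X|$ is below the payoff's natural discretization, so the claim reduces to exact NP-hardness, which the same reduction already provides.)

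The main obstacle I anticipate is the case analysis (i)--(iii): in principle an optimal $A$ could mix partial assignments, extra clause points, and isolated cluster elements in unbounded ways, and confirming that the chosen constant gadget multiplicities really make canonical truth-assignment strategies strictly dominant---so that the reduction is genuinely gap-preserving---requires a careful global accounting rather than just per-variable local arguments.
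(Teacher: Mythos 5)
Your high-level reformulation---classifiers as accept sets $A$, with $x$ ending up accepted iff $A$ meets the ball $B(x)=\{y : c(x,y)<2\}$, and a two-valued metric making the triangle inequality automatic---is sound and matches the skeleton of the paper's argument. But the gadget breaks at exactly the point you flagged, and in a way that no choice of constant multiplicities can repair. Your case (iii) forces the cluster size $\mathrm{cl}$ to be at least the occurrence bound $B\ge 3$ (else including both $T_i$ and $F_i$ pays off), so ``assigning'' variable $i$ misclassifies $1+\mathrm{cl}\ge 4$ negatives. But your case (ii) overlooks a third option: put the positive anchors themselves in $A$. This covers all anchors of variable $i$ while misclassifying only the two adjacent negatives $T_i$ and $F_i$ (which game to an accepted anchor)---total cost $2$, independent of the anchor and cluster sizes, since as written nothing negative hangs on the anchors. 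Moreover any uncovered clause point can be covered by accepting one of its shadows at net cost zero ($+1$ for the clause, $-1$ for the shadow). So the degenerate strategy ``accept all anchors, shadow-cover all clauses'' misclassifies exactly $2n+m$ points, while the assignment strategy misclassifies $n(1+\mathrm{cl})+(m-\mathrm{sat}(\varphi))$. Since bounded occurrence gives $3m\le Bn$, hence $\mathrm{sat}(\varphi)\le m\le Bn/3 < n(B-1)\le n(\mathrm{cl}-1)$ for all $B\ge 3$, the degenerate strategy strictly dominates for every formula, and the strategic optimum equals $(|X|-2n-m)/|X|$ regardless of $\mathrm{sat}(\varphi)$. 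The reduction as specified is therefore not gap-preserving: the two constraints (block double inclusion, which needs large clusters; make assignment beat anchor-acceptance, which needs the literal-inclusion cost below $2$) are contradictory.

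This is precisely the difficulty the paper's construction is built to defeat, and it uses leverage your gadget lacks: point weights (duplication). The paper adjoins a single negative point $R$ of enormous weight $KM$ adjacent to \emph{every} positive point $P_{ij}$, so accepting any positive directly drags in $R$ and is catastrophic; all coverage is then forced through weight-$1$ negative $Q$-points, and the polynomially tuned weights $K(m-1-\frac1m)$ and $2K$ make a gain over the all-reject baseline possible iff one can select $m$ pairwise non-contradictory literal occurrences, one per clause---i.e., iff the formula is satisfiable. Note also that since payoffs are multiples of $1/|X|$, a plain 3SAT reduction with $\mathrm{poly}(m)$ weight amplification already yields hardness at $\eps=|X|^{-\eta}$, so the PCP machinery is unnecessary for the stated theorem (a correct constant-gap reduction would prove something stronger, but you don't have one yet). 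A minimal repair along your lines would at least require hanging shadow negatives on the anchors as well, and then redoing the global accounting over mixed strategies---exactly the ``careful global accounting'' you anticipated but did not supply.
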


\begin{proof}[Proof of Theorem~\ref{thm:npcomplete}]
We will reduce from {\sc 3Sat}.  Suppose we are given
a  {\sc 3Sat} Boolean formula with $n$ variables $x_1,\ldots,x_n$ and $m$ clauses $C_1,\ldots,C_m$, where $C_i$ has three literal occurrences $L_{i1},L_{i2},L_{i3}$.  
We now construct our instance of \textsc{Strategic Optimum} as follows.  We need to specify $X$, $h$, and $c$.
We begin by constructing a weighted population $Y$, which will consist of points $y$ and positive integer weights $w(y)$ for each $y \in Y$.
Our population $X$ will simply consist of $w(y)$ identical copies of each $y \in Y$.  Thus, $|X| = \sum_{y \in Y} w(y)$.
We will also specify labels $h(y)$ for each $y \in Y$, which the points $x \in X$ will inherit.  Fix a number $K$ (polynomial in $m$) to be chosen later.  Our weighted population $Y$ consists of:
\begin{itemize}
\item 
$3m$ points $L_{ik}$ for $1 \leq i \leq m$ and $k \in \inset{1,2,3}$, corresponding to the literal occurrences in the clauses.
These points each have weight $w(L_{ik}) = K(m-1-{1\over m})$ and label $h(L_{ik}) = -1$.
\item 
${m}\choose{2}$ points $P_{ij}$ for $1\leq i<j\leq m$ corresponding to unordered pairs $\{C_i,C_j\}$ of clauses.  
These points each have weight $w(P_{ij}) = 2K$ and label $h(P_{ij}) = +1$.
\item 
$9\cdot{{m}\choose{2}}$ points $Q_{ikj\ell}$, for $1\leq i<j\leq m$, and for $k,\ell \in \inset{1,2,3}$ so that $L_{ik}$ is {\em not} the negation of $L_{j\ell}$.  These points correspond to unordered pairs of literal occurrences $\{L_{ik},L_{j\ell}\}$ of literal occurrences in different clauses {\em which are not contradictory}.  
They have weight $w(Q_{ijk\ell}) = 1$ and label $h(Q_{ijk\ell}) = -1$.  (Actually, their label does not matter). 
\item 
One other point $R$ with a huge weight $w(R) = KM$, for a very large value $M$, and label $h(R) = -1$. 
Choose $M = 2{m \choose 2}$.
\end{itemize}

We next define a metric $c:X \times X \to \R_+$.  It will take only two nonzero values, $1.5$ and $2.5$.  Notice that this guarantees $c$ satisfies the triangle inequality.  We will choose $c$ so that $c(x,x) = 0$ and $c(x,y) = c(y,x)$, and so $c$ will indeed be a metric.
To describe $c$, it suffices to describe the points which are ``close," that is, which have distance $1.5$.
Further, it suffices to define $c$ for points in $Y$, and we will extend it to $X$ in a natural way: for points $x,x' \in X$, if they come from the same $y \in Y$, they will have distance $1.5$; if $x,x'$ come from $y\neq y'$ respectively, then $c(x,x') = c(y,y')$.  The close pairs of points in $Y$ are:
\begin{itemize}
\item All pairs of the form $\{P_{ij},Q_{ijkl}\}$;
\item All pairs of the form  $\{P_{ij},R\}$;
\item All pairs of the form  $\{Q_{ijk\ell},L_{ik}\}$ or $\{Q_{ijk\ell},L_{j\ell}\}$.
\end{itemize}
\begin{claim}\label{claim:npclaim}
If the given formula is unsatisfiable, the number of points labeled $+1$ by the \jr's optimum $f$ is equal to
\[ b = K\inparen{ M + 3m\inparen{ m - 1 - {1 \over m}} } + 9{m \choose 2} , \]
which we call the \em baseline \em payoff.
Otherwise, if the given formula is satisfiable, then there is a labeling $f$ of the points with payoff at least $b + K - 9{m \choose 2}$.
\end{claim}
\begin{proof}
In the following, we will consider a graph with vertices $Y$.   Two vertices $x,y$ are neighbors in this graph if $c(x,y)  = 1.5$.  Let $\Gamma(x)$ denote the neighbors of $x$ in this graph. Thus, the best-response $\Delta$ to a classifier $f$ is
\[ \Delta(x) = \begin{cases} x & f(x) = 1 \\ x & f(x) = -1 \text{ and } f(y) = -1 \forall y \in \Gamma(x) \\ y & f(x) = -1 \text{ and } f(y) = 1, y \in \Gamma(x) \end{cases}, \]
where above if $y$ in the last case is not uniquely defined \ct can pick any such $y$.
 
First observe that the baseline payoff is obtained by the classifier $f(x) = -1$ for all $x \in X$, and so it is certainly acheivable. 
We now argue that \jr can do better if and only if the original formula was satisfiable.  We make a few observations about \jr's optimal classifier $f$.
\begin{itemize}
	\item First, because of our choice of $M$, we must have $f(P_{ij}) = -1$ for all $i,j$.  
 Indeed, our choice implies that $KM > |X| - KM$; thus, if $f(P_{ij}) = 1$ for some $i,j$, then
 \ct will set $\Delta(R) = P_{ij}$, and \jr will mis-classify the point $R$, and get a payoff worse than the baseline.
	\item Next, $f(L_{ik}) = -1$ for all $i,k$.  Indeed, since $h(L_{ik}) = -1$ and $h(x) = -1$ for all of the ($Q$-type) neighbors of $L_{ik}$, there can be no benefit to \jr for making $f(L_{ik}) = +1$.  
	\item For each $P_{ij}$, at most one $Q$-point $Q_{ikj\ell}$ in $\Gamma(P_{ij})$ has $f(Q_{ikj\ell}) = +1$.  Indeed, each $Q$-point is connected to exactly one $P_{ij}$, and once one of them is accepted by \jr, she can gain nothing by accepting additional points of $\Gamma(P_{ij})$.
\end{itemize}
Thus, the optimal $f$ only assigns positive weights to $Q$ points, and it does so to at most one $Q$-point in each $\Gamma(P_{ij})$.  Suppose that $f(x) = +1$ for the set $A$ of $Q$-points, and let $B = \Gamma_L(A)$ be the set of $L$-points adjacent to $A$.  Now, the size of $B$ can vary based on how the literals overlap with the clauses.  It satisfies
\[ \frac{2|A|}{m-1} \leq |B| \leq 2|A|, \]
where the lower end is attained when there are complete collisions, and the upper end is attained when there are no collisions.
Now consider the number of points of $X$ that \jr classifies correctly under such an $f$.  It is
\[ K \inparen{ M + (3m - |B|)\inparen{m - 1 - {1\over m}} + 2|A| } + \inparen{ 9{m \choose 2} - |A| } = b + \delta, \]
where
\[ \delta = K \inparen{ |B|\inparen{m - 1 - {1 \over m }} + 2|A| } - |A|. \]
Consider this first term, which is multiplied by $K$.  This is only positive when $|B| = \frac{ 2|A| }{m-1}$ is as small as it can possibly be, which happens only if $|A| = {m \choose 2}$ and $|B| = m$.   In this case, the first term is equal to $K$, and we have $\delta = K - |A| \geq K - 9{m \choose 2}$.
But this happens if and only if we can choose $m$ different literals $L_{ik}$, one from each clause, so that no pair of them contradict each other; that is, if and only if the original formula was satisfiable.
\end{proof}
Now the theorem follows quickly from the claim.  We choose $K$ to be a large polynomial in $m$, say $m^{2/\eta}$ for some small constant $\eta$.
Thus, $|X|$ is on the order of $m^{2/\eta + 2}$.
Suppose there is a polynomial-time algorithm which approximates the strategic optimum up to $\eps$.
Claim~\ref{claim:npclaim} implies a contradiction for any 
\[ \eps < \frac{K - 9{m \choose 2} }{|X|} = \frac{ K - 9{m \choose 2} }{ K\inparen{ 3m + {m \choose 2} + M } + 9{m \choose 2} }. \]
Using our choice of $K$, for sufficiently large $m$ the right hand side is at least $|X|^{-\eta}$.  Thus, we have a contradiction whenever $\eps < |X|^{-\eta}$.
\end{proof}

The metric constructed in the proof has ``separability dimension''
(the smallest number of separable functions needed to achieve it as a minimum)
that grows linearly with the population.  The same dimension appears in
the exponent of the running time of the algorithm of the previous section.
It is an interesting open problem to determine whether this exponential
dependence is inherent; the other possibility is that the problem is
{\em fixed-parameter tractable}  with respect to the ``separability dimension'' 
parameter.  We suspect that exponential dependence is necessary.

\section{Experiments}
\label{sec:experiments}
We conducted experiments on real data from a Brazilian social
network called Apontador that provides location-based recommendations and
reviews.  The data set was introduced in the context of spam fighting in a
recent work by Costa et~al.~\cite{CostaMBB14} and is available from the
authors upon request. The
data set consists of $7076$ instances of so-called ``tips'' half of which are labeled as
``spam''.  Tips are pieces of user-provided content associated with the places
listed on Apontador.  The paper distinguishes between different types of spam, but 
the distinction does not matter for us, so
we
will only consider one category. There are $60$ features in total, but to
facilitate the modeling of a cost function we restricted our attention to the
$15$ most discriminative features as indicated by previous
work~\cite{CostaMBB14}. We normalized
all features of the data to have zero mean and unit standard deviation.

The goal of our cost function is not primarily to capture monetary cost of changing
certain attributes. Apart from attributes like ``number of followers'',
most attributes are technically easy to change. Rather the goal of a cost function is to
capture the loss in expected revenue that a spammer experiences when changing
certain parts of the spam message. If, for instance, it is essential for the
spam message to contain a URL or contact information, then the spammer
experiences lost revenue when such information is omitted. Similarly, the
spammer could choose to post his messages on the pages of lower-rated places, but 
such pages are less frequented and hence his utility decreases. Similar reasoning
applies to the modeling of the other attributes. Cheap attributes are those
that can be changed without a loss in utility for the spammer. For example,
the ``number of words'' is not robust as the spammer can freely choose to write
longer or shorter messages.

With this intuition in mind, we model our cost function as a simple linear
function truncated at $0$ to make it non-negative. That is we consider a cost
function of the form $c(x,y)=\langle\alpha,y-x\rangle_+.$ Truncation at $0$ is
a meaningful modeling decision, since a spammer doesn't derive any utility
from, say, decreasing the number of his followers even though it is costly to
increase this attribute. 

The cost vector $\alpha$ specifies for each attribute
a coefficient quantifying the cost of changing that attribute.
We do not attempt to construct as realistic a cost function as possible.
We only distinguish between three types of cost: somewhat costly to increase
(coefficient $1$), somewhat costly to decrease (coefficient $-1.0$) and cheap to
increase (coefficient $0.1$). The concrete values of these coefficients are rather
arbitrary and different choices may be more suitable.  
The next table details each feature with its description and its associated cost. 
For a more detailed explanation of these features, the reader is referred
to~\cite{CostaMBB14}.

\begin{center}
\begin{tabular}{|l|l|l|}
\hline
& Description & Cost coefficient\\
\hline
 1 & Number of tips on the place & $-1$\\
 2 & Place rating & $-1$ \\
 3 & Number of emails  & $-1$\\
 4 & Number of contact information  & $-1$\\
 5&  Number of URLs  & $-1$\\
 6&  Number of phone numbers &  $-1$\\
 7&  Number of numeric characters&  $-1$\\
 8&  SentiStrength score& $1$ \\
 9&  Combined-method& $1$\\
10&  Number of words& $0.1$\\
11&  Ratio of followers to followees & $1$ \\
12&  Number of distinct $1$-grams & $0.1$ \\
13&  Number of tips posted by user & $0.1$\\
14&  Number of followers & $1$\\
15&  Number of capital letters & $0.1$\\
\hline
\end{tabular}
\end{center}

We made no attempt to arrive at a perfectly-realistic cost function. Instead
our focus is on a qualitative comparison of our approach with a standard SVM
classifier, which does not take gaming into account. We selected SVM as a
representative classifier as it was shown in previous work~\cite{CostaMBB14}
to achieve high classification accuracy on this data set compared with other
standard classifiers. For simplicity and increased interpretability, we use a
\emph{linear} SVM which still achieves high accuracy.

If we were to assume that our model of gaming and choice of cost function were
perfectly correct, then a standard SVM would perform very poorly when compared
with our algorithm. To obtain a more balanced comparison, we take modeling
inaccuracies into account in our experiments. Specifically, we account for two
potential inaccuracies in our model:
\begin{enumerate}
\item The true cost function is not the one on which we train our algorithm.
\item The amount of gaming varies and does not necessarily 
correspond to the threshold predicted by our theoretical framework.
\end{enumerate}
Finally, we explore a convenient way to interpolate between the classifier
suggested by our approach and standard classifiers. This leads to different
trade-offs which are more favorable in certain settings.

\remove{
\TODO{Replace the following with something reasonable.  I've provided a brief description of what the plots are.}
\textcolor{red}{
We did some stuff with Apontador data (cite some stuff, explain).  We stress that we made no real attempt to model a cost function, the point of this exercise is a proof-of-concept on real data.  Since our cost function was pretty arbitrary, we expect good results for any reasonably realistic cost function.  We also show that our algorithm is robust to errors in our model of the cost function.
}
\TODO{Also make the following remark human-readable.}
\begin{remark}[Cost functions for spammers or for everyone?]
In our model, we assume that the separable cost function $c(x,y)$ applies to all $x,y \in X$, regardless of their type.  However, in the above, we motivated our cost function as being a cost function for \em spammers, \em only a small subset of the population $X$.
We justify this in two ways.
\begin{enumerate}
\item First, in the linear case, this only helps us in our experiments.  Suppose that both spammers and legitimate users have linear cost functions, and that spammers have a cost function given by $\alpha$, while legitimate users have a cost function given by $\beta$.  Then Algorithm \ref{alg:linear} will return a cost function that disproportionately hampers spammers.  Indeed, in order to move in the direction of $\alpha$ (in order to be accepted by our classifier), the spammer will have to pay, because that is his cost function.  On the other hand, legitimate users can pay much less (in this extreme linear case, nothing at all) to move in the direction of $\alpha$, by moving along the projection of $\alpha$ onto $\beta^\perp$.  
\item The reader may also worry that the spam case does not really fall into our framework: after all, two cost functions seem to be needed!  We argue that this is not the case.  Indeed, in our framework, we imagine that there is some ``hardness" function $g$ for attaining a state.  It costs $g_{spam}(x)$ for a spammer to be at state $x \in X$, and it costs $g_{legit}(x)$ for a legitimate user.  The cost function for spammers is then given by $c_{spam}(x,y) = \max \inset{ 0, g_{spam}(y) - g_{spam}(x) }$, and $c_{legit}(x,y)$ is defined similarly for legitimate users.  But if this is the case, then the separable cost function 
\[ c(x,y) = \max\inset{ 0, g(y) - g(x) } \qquad g(x) :=  \begin{cases} g_{spam}(x) & \text{ $x$ is a spammer } \\ g_{legit}(x) & \text{ $x$ is a legitimate user } \end{cases} \]
is a cost function for the whole space. 
\end{enumerate}
\end{remark}
}

\subsection{Comparison with SVM under robustness to modeling errors}
We now show that our method is robust to 
significant modeling errors while simultaneously outperforming SVM even if
only a small amount of gaming occurs.

To formalize our error model, we assume that there is a true 
underlying cost function which differs from the cost function we feed into
Algorithm~\ref{alg:linear}. We imagine that the true cost function is 
some mixture of the linear cost function described above, plus a squared
Euclidean distance term:
\begin{equation}\label{eq:true}
 c_{\mathrm{true}}(x,y) = (1 - \eps)\ip{ \alpha }{ y - x }_+ + \eps \twonorm{ x - y }^2. 
\end{equation}
On the other hand, we run our algorithm on a cost function which is incorrect
in two ways.  First, it is separable, so it necessarily ignores the
squared-distance  term.  Second, we do not imagine that we have correctly identified $\alpha$, and we replace it with some $\alpha'$:
\[ c_{\mathrm{assumed}}(x,y) = \ip{ \alpha' }{y - x }_+. \]
The addition of the Euclidean norm in \eqref{eq:true} reflects the possibility that our separability assumption does not exactly hold.  The difference between $\alpha$ and $\alpha'$ reflects the possibility that we may not even have accurately identified the separable part.
We stress that not only does our algorithm not know the true cost function, it also does not know the parameter $\eps$, or how much $\alpha$ differs from $\alpha'$.

For our experiments, we considered a range of values of $\eps$, and we generated $\alpha'$ from $\alpha$ at random by adding Gaussian noise and re-normalizing.
We develop our classifier using $c_{\mathrm{assumed}}$, but then for tests allow \cnt to best-respond to the 
classifier given the cost function $c_{\mathrm{true}}$.
We note that finding the best response to a linear classifier given the cost function $c_{\mathrm{true}}$ is a simple calculus problem.

The other parameter we varied is the amount of gaming allowed. In our
theoretical framework above, the \cnt is always willing to pay a cost of up to
$2$, since his payoff for switching is $1 - (-1) = 2$.  To relax this
assumption and vary the amount of
gaming allowed, we multiply both $c_{\mathrm{true}}$ and $c_{\mathrm{assumed}}$ by $2/t$; we say
that this allows $t$ units of gaming.  Notice that by the definition of
$c_{\mathrm{true}}$, this means that the \cnt is willing to move distance $t$ in
the direction of $\alpha$, and possibly more in other directions.  As
mentioned above, we have normalized the standard deviation of all attributes
to be~$1$. 

Within the above error model, we compare our algorithm with SVM as a 
representative standard classifier. 
Figures \ref{fig:compare}  show that our algorithm outperforms SVM, even under
a small amount of gaming, and even in the presence of significant modeling
errors.

\begin{figure}[ht!]
\includegraphics[width=0.49\textwidth]{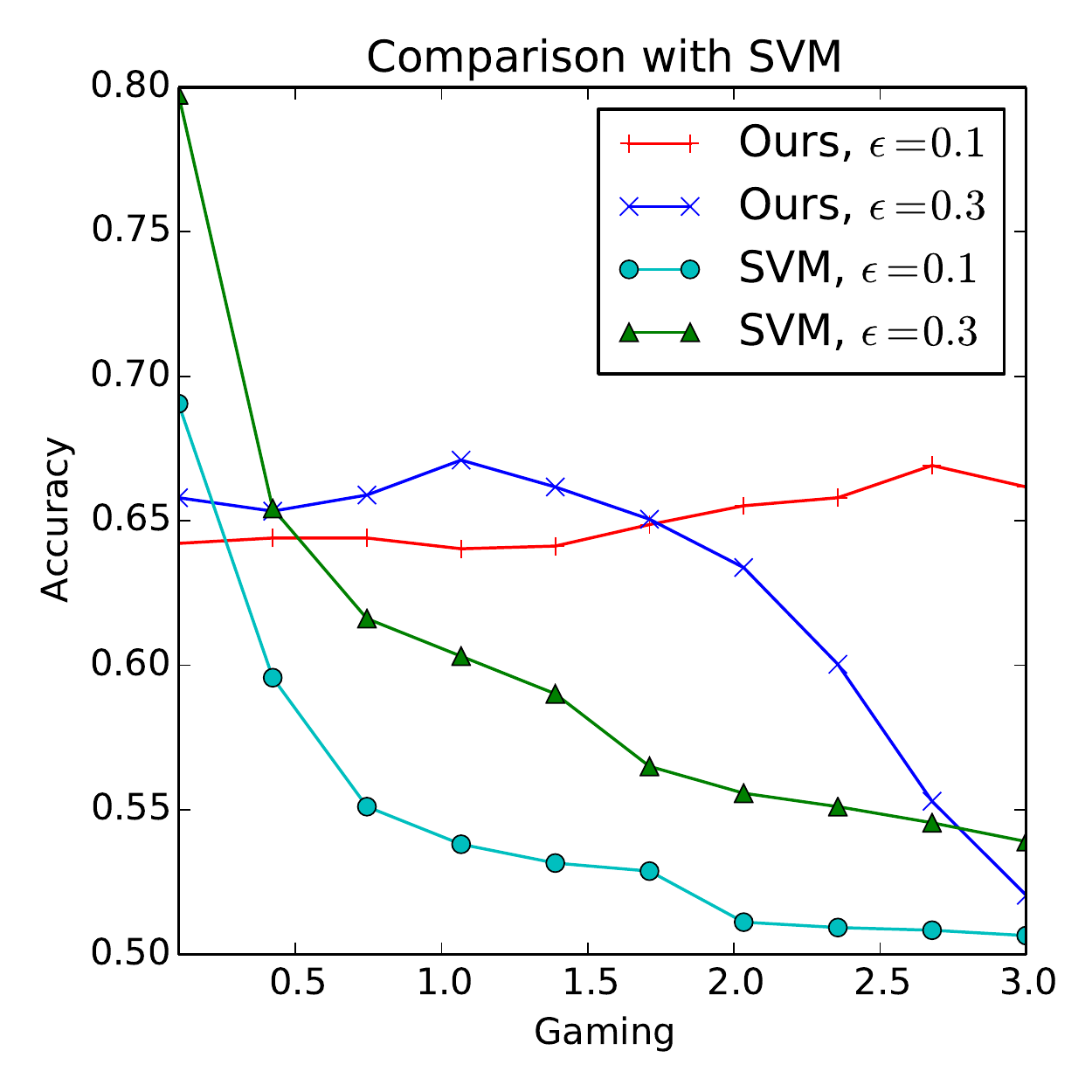}
\includegraphics[width=0.49\textwidth]{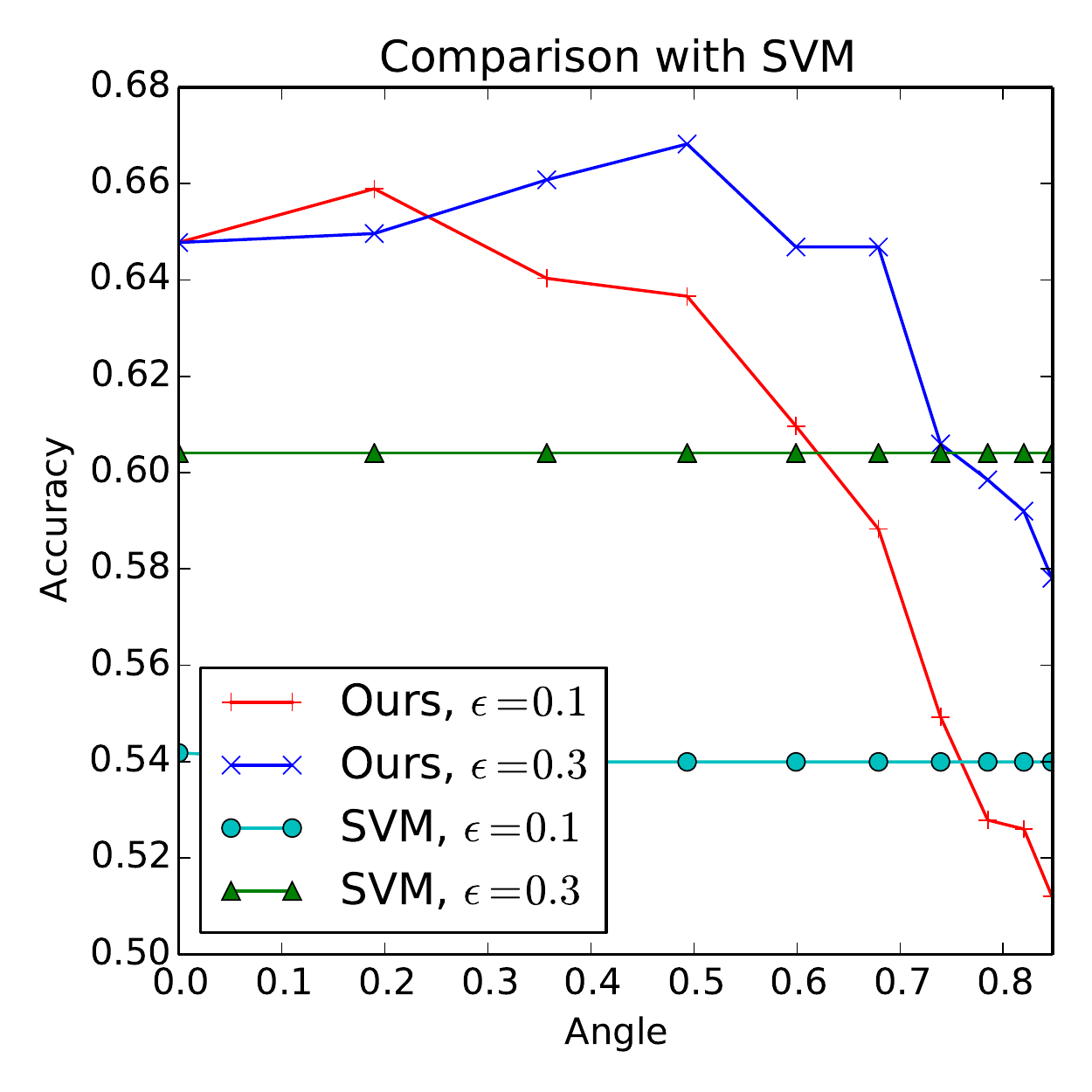}
\caption{{\bf Left:} Our algorithm compared with SVM as the amount of gaming is increased.  
The $x$-axis tracks the amount of gaming, which is quantified as described above. 
The parameter $\eps$ in $c_{\mathrm{true}}$ is specified in the legend.
We have set $\sin \theta( \alpha, \alpha' ) = 0.394$ (again, $\alpha'$ was randomly generated from $\alpha$ by adding Gaussian noise and re-normalizing).
{\bf Right:}
Our algorithm compared with SVM as the angle between $\alpha$ and $\alpha'$
increases. The $x$-axis measures the angle $\sin\theta(\alpha,\alpha')$. 
The amount of gaming was fixed at $1.0$, and the parameter $\eps$ in $c_{\mathrm{true}}$ is 
specified in the legend.
}\label{fig:compare}
\end{figure}

\subsection{A hybrid approach for higher accuracy}
In practice it is convenient to start from a standard classifier and make it
more robust to gaming and as opposed to adopting an entirely new classifier.
Our framework gives a convenient way to incorporate a set of known classifiers
into the design of a strategy-robust classifier. As we show below this can
lead to more favorable trade-offs between gaming and accuracy.

The basic idea is to use each known classifier as a feature to which we assign a positive
weight in the cost function. In other words, we stipulate that
the classifier is by itself a somewhat reliable attribute of the data. Below
we try out this hybrid approach by combining our classifier with the standard SVM classifier.  
Indeed, we find in our experiments that the hybrid has higher accuracy in a
robust range of parameters. This is shown in Figure~\ref{fig:interpolate}.

In the case of a linear SVM, the decision boundary is given by a vector
$\beta$ and we can simple add this vector to our cost function.
We assume that the true cost function $c_{\mathrm{true}}$ is as above, 
but we modify $c_{\mathrm{assumed}}$ as:
\[ c_{\mathrm{assumed}}(x,y) = \ip{ (1 - \gamma)\alpha' + \gamma \beta }{y - x}_+, \]
where $\beta$ are the SVM coefficients learned from the training data set.

\begin{figure}
\begin{center}
\includegraphics[width=0.49\textwidth]{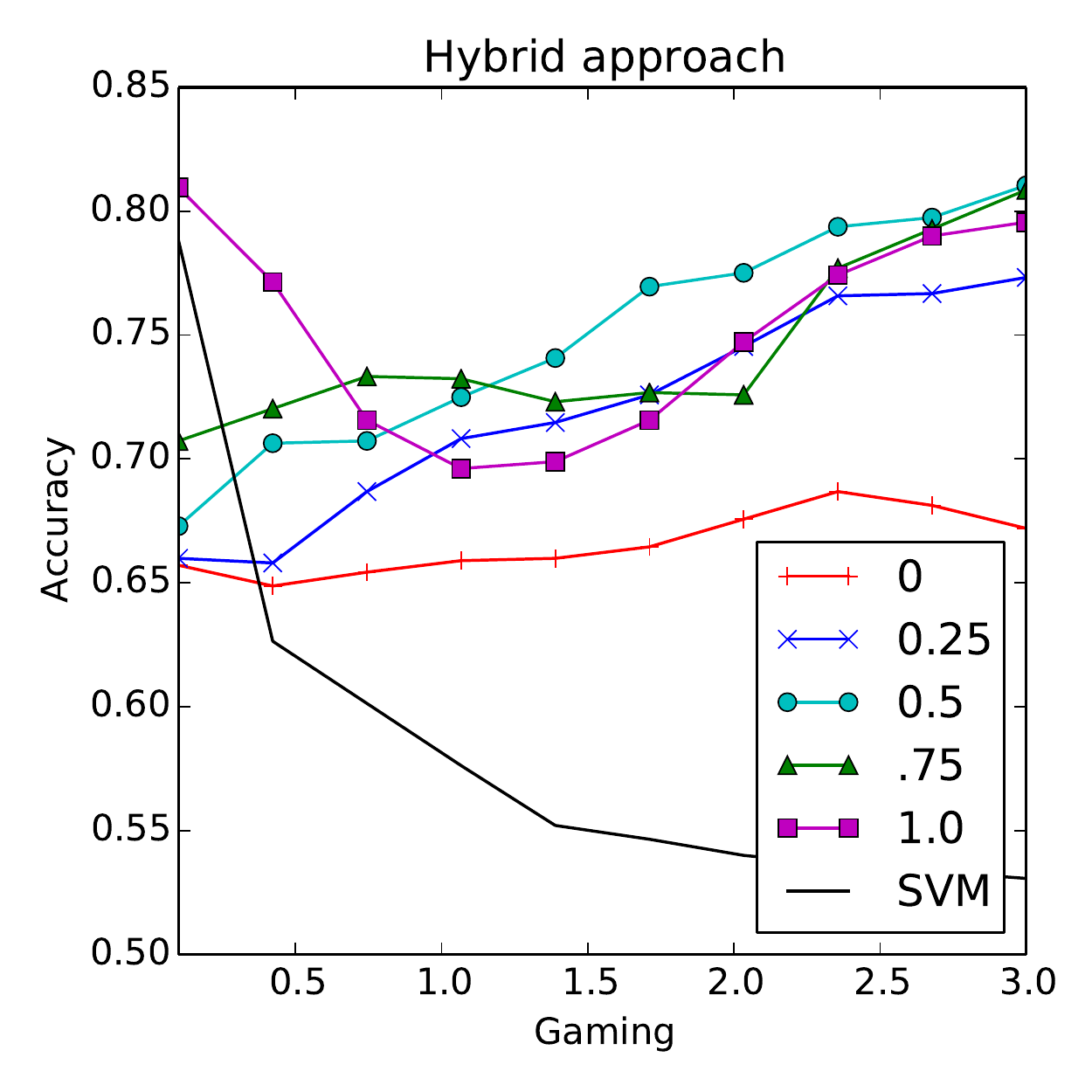}
\end{center}
\caption{Interpolation between the SVM classifier and our classifier. 
In the model above, we begin with a cost function that has $\sin\theta( \alpha, \alpha' ) \approx 0.2$ and $\eps = 0.2$.  
Then we mix the weight vector $\alpha'$ with the weights $\beta$ obtained from
SVM to arrive at $ \alpha'' = (1 - \gamma) \alpha + \gamma \beta$ which
defines the assumed cost function.
The lines in the plot above show what happens as the amount of gaming
increases when setting $\gamma=0,0.25,0.5,0.75,1$.
Notice that the difference between the SVM curve and the curve with $\gamma=1$ is that
the classifier for $\gamma=1$ is shifted according to our algorithm.
}\label{fig:interpolate}
\end{figure}

\remove{
\section{Conclusion and Open Problems}

\begin{description}
\item[Rewarding honesty] A desirable property of a classifier~$f$ is that ``honest people
shouldn't have to cheat''. In other words, if you are certain that you are a
qualified individual~$x$, you shouldn't have to make a move $\Delta(x)\ne x$
in order to get accepted. A sufficient condition is easy to formalize by requiring that
$\Pr_\cD\Set{h(x)=f(x)}\ge 1-\epsilon.$ Depending on the cost function and the
distribution, it is still possible that the \jr has a good strategy. Can we
find efficiently? More formally, can we have Theorem \ref{thm:separable} even under
this extra constraint? It appears that the answer is ``no'', because the extra
constraint actually requires \jr to learn the classifier which is not
necessary for the result in Theorem \ref{thm:separable}. But, is the answer ``yes''
if we additionally assume that $\cH$ is efficiently PAC-learnable?
\mkw{Notice that, as in our examples, choosing to use a cost function modeled on the ``bad guys"
is one way of rewarding honesty, while simultaneously helping out the Jury: it costs more to game the classifier if you have the ``bad type" of cost
function.}
\item[Computational hardness] We can show the following: If the cost function
is the Euclidean norm, then a computationally efficient strategy-robust
learning algorithm for a concept class~$\cH$ implies a computationally efficient
PAC learning algorithm for~$\cH.$ This shows that even very simple
non-separable cost functions lead to computational hardness when the concept
class is hard to learn. There are several questions. For example, are there
simple cost functions for which strategy-robust learning is hard even if PAC
learning is easy? In other words, the hardness is in finding the strategic max
rather than in the learning problem. So far, we can only show this for
contrived cost functions.
\end{description}
}

\section*{Acknowledgments}

We are grateful for stimulating discussions with Cynthia Dwork, Brendan Juba, Silvio Micali, Omer
Reingold and Aaron Roth. We are also grateful to Fabricio Benevenuto for pointing us to the Apondator data set and sharing it with us,
and to an anonymous reviewer for pointing out that our uniform result implied the non-uniform corollary.

\bibliographystyle{alpha}
\bibliography{refs}

\appendix

\remove{
\section{Alternative solution concepts}
\sectionlabel{alternatives}
\input{alternatives}
}

\section{Proof of Theorem \ref{lem:minsep}}\label{sec:minproof}

\begin{proof}[Proof of Theorem \ref{lem:minsep}]
Fix $h \in \cC$.  As in the proof of Theorem \ref{thm:separable}, we begin by defining the set $\Gamma(f)$ of $x \in X$ so that $f(\Delta(x)) = 1$ when $\Delta$ is a best response to $f$.
For every $f \in \cC$, we have
\begin{align*}
 \Gamma(f) 
 &:= \inset{ x \suchthat \max \{ f(y) \,:\, y \in \Gamma(x) \} = 1 } \\ 
 &=  \inset{ x \suchthat (\exists y \in X,\ \exists b \in \mathcal{B})(f(y) = 1,\ b(x,y) < 2) } \\ 
 &=  \inset{ x \suchthat (\exists b \in \mathcal{B})(b_1(x) > \min \{b_2(y)\,:\, f(y) = 1\} - 2)  }\\
 &= \cup_{b \in \mathcal{B}} \big\{ x \suchthat  b_1(x) >  \min\{ b_2(y)\,:\, f(y)=1  \} - 2 \big\}~ . 
 \end{align*}
 Now we can again restrict our attention to nicely structured functions. For any $f$, let
 \begin{equation}\label{eq:specialf}
  f'(y) = \begin{cases}
    ~~  1 & \text{if} ~~~(\forall b \in \mathcal{B})( b_2(y) \geq \min\{ b_2(z)\,:\, f(z) = 1  \})  \\ 
     -1 & \text{otherwise~.} \end{cases}
  \end{equation}
 Then, as in the proof of Theorem~\ref{thm:separable}, we have 
 \[ \min \{ b_2(y)  \,:\, f(y) = 1\} ~=~ \min \{ b_2(y)\,:\, f'(y)=1 \} \]
 for all $b \in \mathcal{B}$.
Indeed, 
\[ \min\{ b_2(y) \,:\, f'(y)=1 \} ~\geq~  \min \{  b_2(z)\,:\, f(z) = 1\} \]
by definition of $f'$,
and
\begin{align*}
  \min \{ b_2(y)\,:\, & f'(y)=1 \} \\
 =&\ \min \inset{ b_2(y) \suchthat y \in \cap_{b \in \mathcal{B}} \inset{ w \suchthat b_2(w) 
         \geq \min \{  b_2(z)\,:\, f(z) = 1   \} } }   \\
\leq &\ \min \inset{ b_2(y) \suchthat f(y) = 1} ~, 
\end{align*}
using the fact that 
\[ \inset{y: f(y) = 1} ~\subset ~\inset{ w \suchthat b_2(w) 
\geq \min \{ b_2(z)\,:\, f(z)=1 \} }
\]
for all $b \in \mathcal{B}$.
Thus,
\begin{align*}
 \Gamma(f) ~=&\ \cup_{b \in \mathcal{B}} \big\{\, x \suchthat b_1(x) > 
   \min \{ b_2(y)   \,:\, f(y)=1 \} -2 \, \big\} \\
 =&\ \cup_{b \in \mathcal{B}} \big\{\, x \suchthat b_1(x) > 
   \min \{  b_2(y) \,:\, f'(y) = 1 \} -2   \,\big\}~ =~ \Gamma(f')
 ~.
 \end{align*}
Thus, as before, the payoff to \jr if she plays $f$ is the same as if she plays $f'$:
 \begin{align*}
  {\mathbb{P}}  \left( h(x) =  \right.  \max & \{ f(y)\,:\, {y \in \Gamma(x)} \left.  \right)  \\
  = &\  {\mathbb{P}}\left( \inparen{ \Gamma(f) \triangle h }^c \right)  \\
  = &\  {\mathbb{P}}\left( \inparen{ \Gamma(f') \triangle h}^c \right)  \\ 
  = &\  {\mathbb{P}}\left( h(x) = \max\{f'(y)\,:\, y \in \Gamma(x)\}  \right)   
   ~. 
   \end{align*}
 Thus, it suffices to consider classifiers $f$ of the form \eqref{eq:specialf}.  Moving the quantifiers around, it suffices to consider classifiers of the form
\begin{equation}\label{eq:specialfagain}
  f = \min  \{ \thresh{b_2}{\vec{s}_b} \,:\, b \in \mathcal{B} \}
\end{equation}
 for 
\[  S_{\mathcal{B}} := \vec{s} \in \bigoplus_{b \in \mathcal{B}} \inparen{ b_2(X) \cup \inset{\infty}} ~.  \] 
Here, $\vec{s}_b$ plays the role of 
$\min \{b_2(z)\,:\, f(z) = 1 \}$, 
and $\vec{s}_b = \infty$ means that $f(z) = -1$ for all $z \in X$. 
 For $f$ as in \eqref{eq:specialfagain}, we have
 \[ \Gamma(\min \{  \thresh{b_2}{\vec{s}_b}\,:\, b \in \mathcal{B} \}  ) 
 = \bigcup_{b \in \mathcal{B}} \inset{ x \suchthat b_1(x) > \vec{s}_b - 2}, \]
 and a best-possible payoff to \jr is obtained by finding the best thresholds $\vec{s}$: 
\begin{align*}
  OPT_h(\cD, c) 
  = &\  1 - \inf_{ \vec{s} \in S_{\mathcal{B}} } 
    \big\{ {\mathbb{P}} \left( h(x) \neq 
      \min\{ \thresh{b_1}{\vec{s}_b- 2} (x)\,:\, {b \in \mathcal{B}} \}        \right) \big\} \\
  =: &\ 1 - \inf_{\vec{s} \in S_{\mathcal{B}}}\{ \err(\vec{s}) \}~. 
  \end{align*}
In Algorithm~\ref{alg:minsep}, \jr returns 
\[ f = \min\{ \thresh{b_2}{\vec{s}^*_b   } \,:\, b \in \mathcal{B} \} ~,\]
and as above
the payoff to \jr from this $f$ is
\[ {\mathbb{P}} \left( h(x) \neq \thresh{c_1}{\vec{s}^* - 2}(x) \right) = 1 - \err(\vec{s}^*)~. \]
As in the proof of Theorem~\ref{thm:separable}, to prove Theorem~\ref{lem:minsep} it suffices to show that for all $h \in \cC$,
\begin{equation}\label{eq:want2}
 \err(\vec{s}^*) ~\leq~ \inf \{ \err(\vec{s}) \,:\, \vec{s} \in S_{\mathcal{B}}  \} + \eps ~. 
\end{equation}
As before, we have
\begin{equation}\label{claim:12}
\widehat{\err}(\vec{s}^*) ~=~
   \min\{ \widehat{\err}(\vec{s}) \,:\,  \vec{s} \in S_{\mathcal{B}}  \} ~,
\end{equation}
so it suffices to establish that
 $\widehat {\err}(\vec{s})$ is close to $\err(\vec{s})$, uniformly over $s \in S_{\mathcal{B}}$.
  \begin{claim}\label{claim:22}
  With probability at least $1 - \delta$,  for all $h \in \cC$ and $\vec{s} \in S_{\mathcal{B}}$,
 \[ \inabs{ \widehat{\err}(\vec{s}) - \err(\vec{s}) } \leq 4R_m(\cC) 
 + 8 \sqrt{\mbox{$ \frac{|\mathcal{B}|\ln(m+1)}{m} $}} 
 +   \sqrt{\mbox{$ \frac{ 2 \ln(2/\delta)}{m} $} }~. \]
In particular, if the hypotheses of the lemma are met, with probability at least $1 - \delta$,
\[ \sup \big\{ \left| \widehat{\err}(\vec{s}) - \err(\vec{s})\right| \,:\, h \in \cC, \vec{s} \in S_{\mathcal{B}} \big\}
~\leq~ \eps/2~. \]
 \end{claim}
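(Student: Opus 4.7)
The plan is to mirror the proof of Claim~\ref{claim:2} from Theorem~\ref{thm:separable}, with the only genuinely new ingredient being a bound on the Rademacher complexity of the class of minima of $|\mathcal{B}|$ thresholded functions. First, I would apply the standard symmetrization/Rademacher bound (e.g.\ Theorem~3.2 in \cite{BBL05}) to the class
\[ \mathcal{X} := \inset{ h \cdot \min_{b \in \mathcal{B}} \thresh{b_1}{\vec{s}_b - 2} \suchthat h \in \cH,\ \vec{s} \in S_{\mathcal{B}} }, \]
which gives, uniformly over $h \in \cH$ and $\vec{s} \in S_{\mathcal{B}}$, with probability at least $1-\delta$,
\[ \inabs{ \widehat{\err}(\vec{s}) - \err(\vec{s}) } \leq 2R_m(\mathcal{X}) + \sqrt{\mbox{$\frac{2\ln(2/\delta)}{m}$}}. \]

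Next, using that every function in $\cH$ and every function $\min_{b}\thresh{b_1}{\vec{s}_b-2}$ is $\pm 1$-valued and the identity $uv = |u+v|-1$ for $u,v \in \inset{-1,+1}$, the same contraction argument used in the proof of Theorem~\ref{thm:separable} gives
\[ R_m(\mathcal{X}) \leq 2\inparen{R_m(\cH) + R_m(\mathcal{Y})}, \qquad
\mathcal{Y} := \inset{ \min_{b \in \mathcal{B}} \thresh{b_1}{\vec{s}_b - 2} \suchthat \vec{s} \in S_{\mathcal{B}} }. \]
So everything reduces to bounding $R_m(\mathcal{Y})$.

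The key observation for bounding $R_m(\mathcal{Y})$ is a cardinality argument on any fixed sample $x_1,\ldots,x_m$. For each $b \in \mathcal{B}$, as $\vec{s}_b$ ranges over $b_2(X)\cup\inset{\infty}$, the restriction of $\thresh{b_1}{\vec{s}_b - 2}$ to $\{x_1,\ldots,x_m\}$ takes at most $m+1$ distinct values (exactly the values attained at the $s_{i,b}$ from Algorithm~\ref{alg:minsep}, as in the proof of Claim~\ref{claim:1}). Taking the minimum over $b \in \mathcal{B}$, the class $\mathcal{Y}$ restricted to the sample has cardinality at most $(m+1)^{|\mathcal{B}|}$. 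A Chernoff plus union bound over this finite collection of $\pm 1$-valued functions (as in the proof of Claim~\ref{claim:2}), followed by integration, yields
\[ R_m(\mathcal{Y}) \leq 2\sqrt{\mbox{$\frac{|\mathcal{B}|\ln(m+1)}{m}$}}. \]
Plugging this into the chain above gives the claimed bound, and the ``in particular'' statement then follows by substituting the hypothesized inequality on $m$.

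The only nontrivial step is the cardinality count for $\mathcal{Y}$: the rest is a direct transcription of the argument from the separable case. The mild subtlety is that the minimum over $b$ does not increase the set of \emph{individual} thresholds, so the product bound $(m+1)^{|\mathcal{B}|}$ is tight enough, and no additional contraction is needed because we have already reduced to counting $\pm 1$-valued outputs. If a tighter bound is desired for structured $\mathcal{B}$, one can instead keep $R_m(\mathcal{Y})$ as an abstract quantity (as in the Remark following the theorem), but for the stated bound the $(m+1)^{|\mathcal{B}|}$ count suffices.
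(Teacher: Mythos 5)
Your proposal is correct and matches the paper's own proof of Claim~\ref{claim:22} essentially step for step: the same application of the symmetrization bound (Theorem~3.2 of \cite{BBL05}) to the product class $\mathcal{X}$, the same contraction step splitting off $R_m(\cH)$, and the same $(m+1)^{|\mathcal{B}|}$ counting argument with a Chernoff--union bound to control the Rademacher complexity of the class of minima (which you call $\mathcal{Y}$ and the paper calls $\mathcal{H}$), yielding $2\sqrt{|\mathcal{B}|\ln(m+1)/m}$. No substantive differences to report.
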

 \begin{proof}
 Again, this follows very similarly to the proof of Theorem \ref{thm:separable}.  
 We need to bound the absolute value of the following difference:
 \begin{align*}
  \oneoverm \sum_{j=1}^m &\ind{ h(x_j) \neq 
        \min \{\thresh{b_1}{\vec{s}_b-2} (x_j)\,:\, b \in \mathcal{B} \}   }  \\
   -&
    \EE_{x \sim \cD}\big[ \ind{ h(x) \neq 
        \min \{ \thresh{b_1}{\vec{s}_b - 2}(x) \,:\,  b \in \mathcal{B} \}    } \big]
  \end{align*}
for all $h \in \cC$ and $\vec{s} \in S_\mathcal{B}$.
 As before (via, say, Theorem 3.2 in~\cite{BBL05}), we have for all $h \in \cC, \vec{s} \in S_{\mathcal{B}}$,
 \begin{equation}\label{eq:std2}
  \inabs{ \frac{1}{m} \sum_{j=1}^m \ind{ h(x_j) \neq \min_{b \in \mathcal{B}} \thresh{b_1}{\vec{t}_b}(x_j) } - \EE_{x \sim \cD} \ind{h(x) \neq \min_{b \in \mathcal{B}} \thresh{b_1}{\vec{t}_b}(x) }} \leq 2 R_m(\mathcal{X}) + \sqrt{ \frac{ 2\ln(2/\delta) }{m} }, 
  \end{equation}
 where 
 \[ \mathcal{X} = \inset{ h \cdot \min_{b \in \mathcal{B}} \thresh{b_1}{\vec{s}_b-2} \suchthat h \in \cC, \vec{s} \in S_{\mathcal{B}} }.\]
As before,
\begin{equation}\label{eq:additive2}
R_m(\mathcal{X}) \leq 2 \inparen{ R_m( \cC) + R_m(\mathcal{H})}, 
\end{equation}
 where $\mathcal{H} = \inset{ \min_{b \in \mathcal{B}} \thresh{b_1}{\vec{s}_b - 2} \suchthat \vec{s} \in S_{\mathcal{B}}}$, so it remains to bound $R_m(\mathcal{H})$.
  For fixed $x_1,\ldots, x_m \in X$, we have
 \begin{align*}
 \EE_{\sigma} & \bigg[ \sup \bigg\{ \oneoverm \sum_{i=1}^m \sigma_i \min_{b \in \mathcal{B}} \thresh{b_1}{\vec{s}_b - 2}(x_i) \,:\, \vec{s} \in S_{\mathcal{B}} \bigg\}  \bigg] \\
 &= \EE_\sigma \bigg[ \sup \bigg\{ \oneoverm \sum_{i=1}^m \sigma_i \min_{b \in \mathcal{B}} \thresh{b_1}{\vec{s}_b-2}(x_i) \,:\, \vec{s}  \in \bigoplus_{b \in \mathcal{B}} \inset{ s_{j,b} : j \in [m+1] } \bigg\} 
  \bigg] \\
 &\leq 2 \sqrt{ \mbox{$\frac{\ln((m+1)^{|\mathcal{B}|)}}{m} $}}~.
 \end{align*}  
 Thus, we have
 \[ R_m(\mathcal{H}) \leq 2\sqrt{\frac{|\mathcal{B}|\ln(m+1)}{m}}, \]
 and, along with Equation \ref{eq:std2}, this finishes the claim.
 \end{proof}
 As in the proof of Theorem \ref{thm:separable}, Equation \ref{claim:12} and Claim \ref{claim:22} finish the proof of Theorem \ref{lem:minsep}.
\end{proof}

\end{document}